\pgfplotsset{compat=newest,unit code/.code={\si{#1}},plot coordinates/math parser=false,grid style={lightgray}}
\tikzstyle{block} = [draw, rectangle, minimum height=2em, minimum width=5em]
\tikzstyle{addon} = [draw, rectangle, rounded corners]
\tikzstyle{pinstyle} = [pin edge={<-,thin,black}]
\tikzstyle{pinstyle2} = [pin edge={->,thin,black}]
\tikzstyle{mult} = [draw, isosceles triangle]
\tikzstyle{circ} = [draw, circle]
\tikzstyle{coord} = [coordinate]
\tikzstyle{circ2} = [draw, circle,minimum width=3pt, inner sep=0]
\tikzset{>=latex}
\tikzset{radiation/.style={{decorate,decoration={expanding
waves,angle=90,segment length=4pt}}}}
\newtheorem{theo}{Theorem}
\newtheorem{lem}{Lemma}
\newtheorem{defi}{Definition}
\newtheorem{cor}{Corollary}
\newtheorem{remark}{Remark}
\newtheorem{assume}{Assumption}
\newtheorem{example}{Example}
\newcommand{\fakepar}[1]{\vspace{1mm}\noindent\textbf{#1.}}
\DeclareSIUnit{\belmilliwatt}{Bm}
\DeclareSIUnit{\dBm}{\deci\belmilliwatt}
\newcommand{\norm}[1]{\left\lVert#1\right\rVert}
\newcommand{\abs}[1]{\left\lvert#1\right\rvert}
\DeclareMathOperator*{\mbeq}{\overset{!}{=}}
\DeclareMathOperator*{\mmd}{\mathrm{MMD}}
\DeclareMathOperator*{\mmdsq}{\widehat{\mathrm{MMD}}^2}
\DeclareMathSymbol{:}{\mathord}{operators}{"3A}
\let\originalleft\left
\let\originalright\right
\renewcommand{\left}{\mathopen{}\mathclose\bgroup\originalleft}
\renewcommand{\right}{\aftergroup\egroup\originalright}
\newcommand{\eg}{e.g.,\xspace}
\newcommand{\ie}{i.e.,\xspace}
\newcommand{\cf}{cf.\@\xspace}
\newcommand{\capt}[1]{\mdseries{\emph{#1}}}
\newcommand{\iid}{i.i.d.\@\xspace}
\newcommand{\db}[1]{\footnote{{\bf\color{green!50!black} Dominik: #1}}}
\newcommand{\st}[1]{\footnote{{\bf\color{purple!90!black} Sebastian: #1}}}
\newcommand{\db}[1]{}
\newcommand{\st}[1]{}
\def\figref#1{figure~\ref{#1}}
\def\secref#1{section~\ref{#1}}
\def\eqref#1{equation~\ref{#1}}
\def\Eqref#1{Equation~\ref{#1}}
\def\algref#1{algorithm~\ref{#1}}
\def\1{\bm{1}}
\DeclareMathAlphabet{\mathsfit}{\encodingdefault}{\sfdefault}{m}{sl}
\SetMathAlphabet{\mathsfit}{bold}{\encodingdefault}{\sfdefault}{bx}{n}
\newcommand{\E}{\mathbb{E}}
\newcommand{\R}{\mathbb{R}}
\newcommand{\Var}{\mathrm{Var}}
\newcommand{\xI}{x^\mathrm{I}}
\newcommand{\xII}{x^\mathrm{II}}
\newcommand{\uI}{u^\mathrm{I}}
\newcommand{\uII}{u^\mathrm{II}}
\title{Identifying Causal Structure in Dynamical Systems}
\author{\name Dominik Baumann \email dominik.baumann@it.uu.se \\
        \addr Department of Information Technology\\
        Uppsala University\\
        Uppsala, Sweden
        \AND
       \name Friedrich Solowjow \email friedrich.solowjow@dsme.rwth-aachen.de \\
       \addr Institute for Data Science in Mechanical Engineering\\
       RWTH Aachen University\\
       Aachen, Germany
       \AND
       \name Karl H.\ Johansson \email kallej@kth.se \\
       \addr Division of Decision and Control Systems and Digital Futures\\
       KTH Royal Institute of Technology\\
       Stockholm, Sweden
       \AND
       \name Sebastian Trimpe \email trimpe@dsme.rwth-aachen.de \\
       \addr Institute for Data Science in Mechanical Engineering \\
       RWTH Aachen University\\
       Aachen, Germany}
\begin{document}

\maketitle

\begin{abstract}%   <- trailing '%' for backward compatibility of .sty file
Mathematical models are fundamental building blocks in the design of dynamical control systems. 
As control systems are becoming increasingly complex and networked, approaches for obtaining such models based on first principles reach their limits.
Data-driven methods provide an alternative.
However, without structural knowledge, these methods are prone to finding spurious correlations in the training data, which can hamper generalization capabilities of the obtained models.
This can significantly lower control and prediction performance when the system is exposed to unknown situations.
A preceding causal identification can prevent this pitfall.
In this paper, we propose a method that identifies the causal structure of control systems.
We design experiments based on the concept of controllability, which provides a systematic way to compute input trajectories that steer the system to specific regions in its state space.
We then analyze the resulting data leveraging powerful techniques from causal inference and extend them to control systems.  
Further, we derive conditions that guarantee the discovery of the true causal structure of the system.  
Experiments on a robot arm demonstrate reliable causal identification from real-world data and enhanced generalization capabilities.
\end{abstract}

\section{Introduction}
\label{sec:intro}

When learning models for dynamical control systems, we ideally would like to obtain models that \emph{(i)} generalize well to new domains, \emph{(ii)} are interpretable, and \emph{(iii)} computationally efficient.
However, generalization to new domains and interpretability are particular weaknesses of current black-box machine learning methods \citep{scholkopf2022causality,rudin2019stop}.
We address this problem by first identifying a control system's causal structure and subsequently using this structural knowledge for model learning.

As the understanding of causality differs depending on the domain, we first provide some intuition of what kind of structure we seek to identify.
For this, following \cite{eichler2012causal}, we consider two notions.
The first is \emph{temporal precedence}: causes precede their effects. Temporal precedence is the typical causality notion used in systems theory \citep[p.~31]{hespanha2018linear}.
However, here, we mainly focus on the second notion, \emph{physical influence}: manipulating causes changes the effects.
In other words, we seek experimental routines and tests that enable control systems to learn \emph{(i)} what is the influence of their internal states on one another, and \emph{(ii)} which of their inputs influence which internal states.

Incorporating the causal structure into the model learning problem targets the three characteristics of ideal models outlined above.
A key reason for the shortcomings of most current machine learning algorithms concerning those characteristics is that they mostly rely on a pure statistical analysis of the data~\citep{pearl2018theoretical}.
Thus, when considering stochastic systems and finite sample sizes, these algorithms will, due to spurious correlations, typically find connections between variables, although those variables do not influence one another based on the underlying physical equations.
This can lead to catastrophic errors when extrapolating outside the training data, thus, diminishing generalization capabilities.
Further, such models are less interpretable since it is unclear whether connections between variables in the model indicate a causal influence or merely spurious correlation found in the training data.
A causal analysis mitigates both problems since it identifies which variables actually have a causal influence on each other.
Incorporating this knowledge into the model learning problem yields more interpretable models that generalize better to new domains \citep{pearl2018book}.
Lastly, when only connections between variables that actually have a causal influence on one another are considered, this also reduces the model's parameter space.
Therefore, the causal analysis also helps make the model computationally more efficient.

In this paper, we propose an algorithm that automatically identifies the causal structure of a control system.
The problem of inferring causal structure from observational data, \ie given data for which we cannot influence the data generating process,
 has been addressed using various methods, see \cite{spirtes2000causation,peters2017elements} for an overview.
For control systems, solely relying on observational data is not necessary.
Such systems are equipped with an input, which they can use to actively conduct experiments for causal inference.
Causal inference from experiments, or interventions, has also been studied, most prominently in the context of the do-calculus \citep{pearl1995causal}.
However, there it is assumed that state variables can be directly influenced by the input, which is often not possible in control systems.
In control systems, it is essential to consider a proper notion of \emph{controllability}, \ie how the system can be steered to particular regions in the state space through appropriate input trajectories.
To the best of our knowledge, such notions have not been considered yet in the causality literature.
In the literature on model learning and system identification, we find many techniques for learning mathematical models for control systems \citep{ljung1999system,nguyen2011model,schoukens2019nonlinear}.
In system identification, the problem of the exploding number of parameters for black-box methods has, for instance, been addressed using regularization techniques \citep{schoukens2019nonlinear}.
While these methods can reduce the number of parameters, they may exclude parameters representing a causal influence or include parameters representing spurious correlation depending on the regularization parameter.
Thus, the obtained models may fall short on generalization and interpretation capabilities.

\fakepar{Contributions}
In summary, existing methods from causal inference do not consider a proper notion of controllability, while existing methods from system identification cannot give formal guarantees on finding the true causal structure.
In this paper, we bridge this gap and present an algorithm that identifies a control system's causal structure through an experimental design based on a suitable controllability notion and a subsequent causal analysis, for which we can provide formal guarantees.
For the causal analysis, we leverage powerful kernel-based statistical tests based on the maximum mean discrepancy (MMD) \citep{gretton2012kernel}.
Since the MMD has been developed for independent and identically distributed (\iid) data, we extend it by deriving conditions under which the MMD still yields valid results and by coming up with a test statistic for hypothesis testing, despite non-\iid data.
In terms of controllability, we investigate three different settings: \emph{(i)} exact controllability, where we can exactly steer the system to a desired position, \emph{(ii)} stochastic controllability, where we can only steer the system to an $\epsilon$-region around the desired position, and \emph{(iii)} the special case of linear systems with Gaussian noise that are controllable in the sense of Kalman \citep{kalman1960contributions}, as they represent a widely studied class of systems.
We demonstrate the proposed method's applicability by automatically identifying the causal structure of a robotic system and a simulated, nonlinear quadruple tank system.
Further, we show improved generalization capabilities for the robotic system and reduced computational complexity for the quadruple tank system, both inherited through the causal identification.

\section{Related Work}
\label{sec:nips20_rel_work}

To the best of our knowledge, no other algorithm seeks to identify the causal structure of a dynamical control system through experiments based on a suitable controllability notion.
However, several works in causal inference aim to infer the causal structure of dynamical systems, and several methods in system identification seek to reduce the parameter space or identify structural properties of control systems.
In this section, we discuss those works.

\fakepar{Causal inference for dynamical systems}
Causal inference in dynamical systems or time series has been studied in \cite{demiralp2003searching,eichler2010graphical,moneta2011causal,malinsky2018causal} using vector autoregression, in \cite{peters2013causal} based on structural equation models, in \cite{entner2010causal}, using the fast causal inference algorithm \citep{spirtes2000causation}, and in \cite{salvi2021higher} and \cite{quinn2011estimating}, applying kernel mean embeddings and directed information, respectively.
In~\cite{pfister2019learning}, the authors develop a procedure for learning the causal structure of kinetic systems.
A more extensive overview of causal inference methods that can be applied to time-series data, with a particular focus on Earth system sciences, is provided in~\cite{runge2019inferring}.
While all these methods make causal inference for dynamical systems, none of them investigates experimental design.
Instead, they aim to infer the causal structure from observational data and, thus, need additional assumptions to arrive at statements about the causal structure.
Dynamical systems, as considered in this work, can be actively influenced through a control input.
Hence, we can design experiments and do not need to rely on the data being sufficiently rich.

\fakepar{Experimental design}
A well-known concept for causal inference from experiments is the do-calculus.
In the basic setting, a variable is clamped to a fixed value, and the distribution of the other variables conditioned on this intervention is studied \citep{pearl1995causal}.
Extensions to more general classes of interventions exist, see, \eg \cite{yang2018characterizing,shanmugam2015learning}, but they consider static models, which is different from the dynamical systems studied herein.
Causal inference in dynamical systems or time series with interventions has been investigated in \cite{eichler2012causal,peters2022causal,moij2013from,rubenstein2018from,sokol2014causal}.
However, therein it is assumed that one can directly manipulate the variables, \eg by setting them to fixed values or forcing them to follow a trajectory, which is typically impossible in practice.
None of those works considers various degrees of controllability, which is the case in this paper.
Thus, they are not readily applicable to control systems.

\fakepar{Model selection and regularization}
As an alternative to directly testing causal relations between variables, several methods exist that identify a dynamic model, trading off model complexity and accuracy.
Typically, this is done by letting the algorithm select from a set of candidate models.
In system identification, the Akaike information criterion \citep{akaike1998information} and the Bayesian information criterion \citep{schwarz1978estimating} are two well-known examples of such methods.
In neuroimaging, there are dynamic causal models \citep{friston2003dynamic,stephan2010ten}.
A third family of methods are symbolic regression techniques \citep{bongard2007automated,schmidt2009distilling,brunton2016discovering}.
In all cases, the true causal structure of the system can only be revealed if a model representing this structure is part of the candidate models.
Further, they typically use a regularization parameter to find a trade-off between model complexity and accuracy.
This parameter punishes model complexity (\eg the number of parameters) and rewards goodness of fit.
Thus, it also depends on the specific choice of this regularization parameter whether or not these algorithms find a model representing the system's true causal structure.

\fakepar{Structure detection in dynamical systems}
Revealing causal relations in a dynamical system can be interpreted as identifying its structure.
Related ideas exist in the identification of hybrid and piecewise affine systems \citep{roll2004identification,lauer2018hybrid}.
These approaches try to find a trade-off between model complexity and fit but cannot guarantee to find the true causal structure.
Further methods that identify structural properties of dynamical systems can be found in topology identification \citep{materassi2010topological,shahrampour2014topology,van2013identification} and complex dynamic networks \citep{boccaletti2006complex,liu2009structure,yu2010estimating}.
Those works seek to find interconnections between subsystems instead of identifying a system's inner structure as done herein.
Moreover, while the mentioned works often rely on restrictive assumptions such as known interconnections or linear dynamics,  
our approach can deal with nonlinear systems and does not require prior knowledge.

\fakepar{Kernel mean embeddings}
For causal inference, we will leverage concepts based on kernel mean embeddings, which have been widely used for causal inference \citep{peters2017elements,chen2014causal,lopez2015towards}.
A downside of those methods is that they typically assume that data has been drawn \iid from the underlying probability distributions.
Extensions to non-\iid data exist \citep{chwialkowski2014kernel,chwialkowski2014wild}, but rely on mixing time arguments.
Dynamical systems, as investigated in this work, often have large mixing times or do not mix at all \citep{simchowitz2018learning}. 
Therefore, these types of analyses are not sufficient in this case.

\section{Problem Setting and Main Idea}
\label{sec:problem}
We consider dynamical control systems of the form
\begin{align}
\label{eqn:gen_sys_nl}
 x(t) = f(x(0),u(0\,:\,t),v(0\,:\,t))
\end{align}
with discrete time index $t\in\mathbb{N}$, dynamics function $f$, state $x(t)\in\mathcal{X}\subset\R^n$, state space $\mathcal{X}$, input $u(t)\in\mathcal{U}\subset\R^m$, input space $\mathcal{U}$, and $v(t)\in\R^n$ an independent random variable sequence.
The notation $(0\,:\,t)$ here denotes the sequence $u(0), u(1), \ldots, u(t)$.
In the following, we will omit this notation and simply write $u$ or $v$ if we consider the whole trajectory. 
The description of the system in \eqref{eqn:gen_sys_nl} is different from the standard, incremental version $x(t+1) = \tilde{f}(x(t),u(t),v(t))$.
Specifically, \eqref{eqn:gen_sys_nl} emphasizes the dependence of $x$ at time $t$ on the initial state.
\Eqref{eqn:gen_sys_nl} can readily be obtained by iterating $\tilde{f}$ starting from $x(0)$.

We will lend notation from the do-calculus \citep{pearl1995causal} for causal inference.
In this notation, $\mathbb{P}(x(t)\mid \mathrm{do}(x_i(0)=\xI_i(0)))$ defines the conditional probability distribution of $x(t)$ given that we \emph{set} the initial condition of the $i$th component of the state vector $x$ to a specific value $\xI_i(0)$ while all other components of $x$ and the inputs $u$ are left unaffected.
In this article, we show how such do-conditional distributions can be realized in dynamical systems when considering a suitable notion of controllability and how we can use them for causal inference.

\subsection{Problem Setting}
The system description from \eqref{eqn:gen_sys_nl} is stochastic and, hence, induces a probability distribution $\mathbb{P}(x)$ over trajectories $x$.
Based on this, we define non-causality adapting standard notation from the do-calculus \citep{pearl1995causal} to dynamical systems as in \eqref{eqn:gen_sys_nl}:
\begin{defi}[Global non-causality]
\label{def:independence}
The state variable $x_j$ \emph{does not cause} $x_i$ if $\mathbb{P}(x_i\mid \mathrm{do}(x_j(0)=\xI_j(0)))=\mathbb{P}(x_i\mid \mathrm{do}(x_j(0)=\xII_j(0)))$ for all $\xI_j(0)$ and $\xII_j(0)$.
The superscripts \emph{I} and \emph{II} denote two experimental designs, \ie different choices of $x_j(0)$.
Similarly, $u_j$ does not cause $x_i$ if $\mathbb{P}(x_i\mid \mathrm{do}(u_j=\uI_j)) = \mathbb{P}(x_i\mid \mathrm{do}(u_j=\uII_j))$ for all $\uI_j$ and $\uII_j$.
\end{defi} 
The main objective of this article is to develop an algorithm to test for non-causality in the sense of definition~\ref{def:independence} with theoretical guarantees.
That is, we seek to test whether the initial condition of $x_j$ respectively the input trajectory $u_j$ changes the probability distribution of the resulting $x_i$ trajectory.
The \emph{do}-operator denotes that we force $x_j$ to a specific initial condition and $u_j$ to be a specific input trajectory, while all other initial conditions and input trajectories remain fixed between experiments.
While this is a common assumption of the do-calculus, in dynamical systems, we cannot just \emph{set} state variables to specific values.
Instead, we need to consider a proper notion of \emph{controllability}.
In the following, we discuss how the do-operator can be applied to dynamical systems, \ie how we can, based on an appropriate controllability notion, \emph{steer} state variables toward specific values as required to test for non-causality.
Subsequently, we develop an algorithm that identifies for each pair $x_i$ and $x_j$ and for each pair $x_i$ and $u_j$ of a control system as in \eqref{eqn:gen_sys_nl}, whether or not both variables are non-causal according to definition~\ref{def:independence}.

\subsection{Main Idea}
For causal inference, we will exploit that we can influence \eqref{eqn:gen_sys_nl} through $u$.
To make the notion of \emph{how} we can influence the system precise, we adopt controllability definitions for stochastic systems \citep{sunahara1974stochastic,bashirov1997controllability}:
\begin{defi}
\label{def:eps_controllability}
The system described by \eqref{eqn:gen_sys_nl} is said to be \emph{completely $\epsilon$-controllable} in probability $\eta$ in the normed square sense in the time interval $[0,t_\mathrm{f}]$ if for all desired states $x_\mathrm{des}$ and initial states $x(0)$ from $\mathcal{X}$, there exists an input sequence $u$ from $\mathcal{U}$ such that $\mathrm{Pr}\{\norm{x(t_\mathrm{f})-x_\mathrm{des}}_2^2\ge\epsilon\}\le 1-\eta$, with $0<\eta<1$.
\end{defi}
A variety of methods exist to identify or learn models for systems as in \eqref{eqn:gen_sys_nl}, \eg Gaussian process regression \citep{williams2006gaussian} or fitting linear state space models using least squares \citep{ljung1999system}.
In the following, we assume that we can obtain an estimate $\hat{f}$ of the investigated system in \eqref{eqn:gen_sys_nl} (including an estimate of the distribution of $v(t)$) using techniques from system identification and model learning.
The concrete method that is used to obtain $\hat{f}$ is independent of the developed causal identification procedure.
However, in later sections, we will specify requirements the model estimate $\hat{f}$ needs to satisfy.
This model estimate $\hat{f}$, obtained without further assumptions or physical insights, will, due to the stochasticity of \eqref{eqn:gen_sys_nl}, almost surely entail spurious correlation and suggest causal influences that are actually not present in the physical system.
Nevertheless, it will allow us to (approximately) steer the system to specific initial conditions and start experiments from there.

We propose two types of experiments to test for causal relations.
For the first type, we investigate whether $x_j$ causes $x_i$.
We conduct two experiments (denoted by I and II) with different initial conditions $\xI_j(0)\neq\xII_j(0)$, while all others are kept the same (\cf \figref{fig:exp_design}).
This can be formalized as
\begin{subequations}
\label{eqn:exp_design}
\begin{align}
\label{eqn:exp_design_state}
\xI_\ell(0) = \xII_\ell(0) \text{ for all } \ell \neq j,\quad 
\xI_j(0) \neq \xII_j(0), \quad
\uI_\ell(t) = \uII_\ell(t) \text{ for all } \ell,t.
\end{align}
By comparing the resulting trajectories of $\xI_i$ and $\xII_i$, we can check whether the change in $x_j(0)$ caused a different behavior.
For checking the similarity of trajectories, we will use the MMD, whose mathematical definition we provide in the next section.
\begin{figure}
\centering
\begin{tikzpicture}
\node[block](exp1){Exp I};
\node[block, below = 4em of exp1](exp2){Exp II};
\node[left = 5em of exp1](init1){$\xI_j(0)$};
\node[left = 5em of exp2](init2){$\xII_j(0)$};
\node[align = center] at($(init1)!0.5!(init2)$)(rest){$\xI_{1\ldots n\setminus j}(0)=\xII_{1\ldots n\setminus j}(0)$,\\$\uI=\uII$};
\node[right = 3em of exp1](res1){$\xI_i$};
\node at(res1|-exp2)(res2){$\xII_i$};

\draw[->](init1) -- (exp1);
\draw[->](init2) -- (exp2); 
\draw[->](rest) -- (exp1);
\draw[->](rest) -- (exp2);
\draw[->](exp1) -- (res1);
\draw[->](exp2) -- (res2);
\draw[<->,thick](res1)--node[midway,right]{MMD}(res2);
\end{tikzpicture}
\caption{Experimental design for causal inference.
\capt{We design two experiments, where all initial conditions and input trajectories are constant except for $x_j(0)$.
If the resulting trajectories of $x_i$ are different in both experiments, we have evidence that the change in $x_j(0)$ caused this change.}}
\label{fig:exp_design}
\end{figure}
The second type of experiments is analogous to the first, but instead of varying initial conditions, we consider different input trajectories $\uI_j\neq \uII_j$,
\begin{align}
\label{eqn:exp_design_inp}
\xI_\ell(0) = \xII_\ell(0) \text{ for all } \ell, \quad
\uI_\ell(t) = \uII_\ell(t) \text{ for all } \ell\neq j, \text{ for all } t,\quad
\uI_j(t) \neq \uII_j(t)\text{ for all } t.
\end{align}
\end{subequations}
\begin{remark}
Note that for the testing experiments, we design open-loop trajectories.
That is, the input cannot depend on the system's current state.
This is essential since it creates independent trajectories for which we can leverage the MMD as a similarity measure.
\end{remark}

\subsection{Road-map}
We propose an algorithm that consists of two steps: \emph{(i)} we design experiments, and \emph{(ii)} we analyze the resulting data using the MMD.
The data obtained from these experiments must fulfill specific requirements to allow for proper causal inference.
In the next section, we provide the mathematical definition of the MMD and deduce the requirements on the experimental data.
We then subsequently develop a suitable experimental design based on the controllability notion stated in definition~\ref{def:eps_controllability}.
We state conditions under which this experimental design yields data from which the MMD can provably infer the true causal structure in the infinite sample limit and derive a hypothesis test for finitely many samples.
Until here, we focus on a rigorous convergence analysis.
However, this requires us to do many experiments on the real system.
In \secref{sec:implementation}, we propose a heuristic algorithm that is more efficient in terms of the number of experiments and computations but forgoes some of the guarantees.
Lastly, in \secref{sec:eval}, we demonstrate the method's applicability on a real robotic system and a quadruple tank process and present comparisons with a sparse identification and a causal discovery method on a synthetic linear example.

\section{Causal Identification for Dynamical Systems}
\label{sec:tec_res}

We will now develop the causality testing procedure.
First, we introduce the MMD \citep{gretton2012kernel}, which we shall use as a similarity measure.
The MMD can be used to check whether two probability distributions $\mathbb{P}$ and $\mathbb{Q}$ are equal based on samples drawn from these distributions.
Let $X$ and $Y$ be samples drawn \iid from $\mathbb{P}$ and $\mathbb{Q}$, respectively.
Further, let $\mathcal{H}$ be a \emph{reproducing kernel Hilbert space} \citep{sriperumbudur2010hilbert}, with canonical feature map $\phi:\, \mathcal{X}\to\mathcal{H}$.
The MMD is defined as
\begin{align}
\label{eqn:mmd_gen}
\mmd (\mathbb{P},\mathbb{Q}) = \lVert \mathbb{E}_{X\sim \mathbb{P}}[\phi(X)] - \mathbb{E}_{Y\sim \mathbb{Q}}[\phi(Y)] \rVert_\mathcal{H}.
\end{align}
The feature map $\phi$ can be expressed in terms of a kernel function $k(\cdot,\cdot)$, where $k(x,y)=\langle\phi(x),\phi(y)\rangle_\mathcal{H}$.
If the kernel is characteristic, we have $\mmd (\mathbb{P},\mathbb{Q}) = 0$ if, and only if, $\mathbb{P}=\mathbb{Q}$ \citep{fukumizu2008kernel,sriperumbudur2011universality}.
In the remainder of the paper, we always assume a characteristic kernel (\eg the Gaussian kernel).

\begin{remark}
In general, also other measures that compare probability distributions may be used for our algorithm.
An overview of such methods is provided in \cite{sriperumbudur2012empirical}.
A popular example would be the Kullback-Leibler divergence \citep{kullback1951information}.
In this paper, we propose to use the MMD since it allows us to compare probability distributions without actually estimating them, provides theoretical guarantees, and can be computed efficiently.
\end{remark}

In the following, we derive conditions that allow one to provably identify causal relations.
We investigate three settings.
First, we discuss the case where we can precisely steer the system to desired initial conditions (\ie $\epsilon =0$ in definition~\ref{def:eps_controllability}).
We then extend this to $\epsilon > 0$, which requires a stricter controllability definition.
Finally, we show that for linear systems with additive Gaussian noise, a widely studied class of systems, the conditions stated by Kalman \citep{kalman1960contributions} are sufficient, and the identification is substantially easier.

\subsection{Exact Controllability}
\label{sec:exact_ctrl}
When considering control systems, instead of obtaining single \iid samples from stationary distributions, we receive sequences of random variables sampled from a stochastic process as in \eqref{eqn:gen_sys_nl}.
This data is often non-\iid.
The objects of interest, whose distributions we want to compare, are then the $x_i$ trajectories obtained from two different experimental settings.
To simplify notation, we denote a trajectory obtained from the first setting by $\xI_i$ and the joint probability distribution over the trajectory states by $\mathbb{P}^\mathrm{I}$, and equivalently for $\xII_i$ and $\mathbb{P}^\mathrm{II}$.
We sample from $\mathbb{P}^\mathrm{I}$ and $\mathbb{P}^\mathrm{II}$ by designing two experiments as in \eqref{eqn:exp_design} with fixed length $T$ and repeating each experiment $m$ times.
That is, we obtain $2m$ sequences of $T$ random variables sampled at discrete intervals of fixed length (\ie $t\to t+1$ always has the same length).
These samples we denote by $\textbf{x}^\mathrm{I}_i,\textbf{x}^\mathrm{II}_i\in\R^{m\times T}$.
Note that \emph{(i)} all $m$ sequences are sampled from the same distributions $\mathbb{P}^\mathrm{I}$ and $\mathbb{P}^\mathrm{II}$, \emph{(ii)} while the distributions are non-stationary along time, the distributions for multiple experiments of fixed length $T$ are identical, and \emph{(iii)} all $m$ sequences are independent of each other.
The MMD then reads
\begin{align}
\label{eqn:mmd_dyn_sys}
\begin{split}
\mmd (\textbf{x}_i^\mathrm{I},\textbf{x}_i^\mathrm{II}) = \lVert \mathbb{E}_{\textbf{x}_i^\mathrm{I}\sim \mathbb{P}^\mathrm{I}}[\phi(\textbf{x}_i^\mathrm{I})]
- \mathbb{E}_{\textbf{x}_i^\mathrm{II}\sim \mathbb{P}^\mathrm{II}}[\phi(\textbf{x}_i^\mathrm{II})] \rVert_\mathcal{H}.
\end{split}
\end{align}
If we now design experiments using \eqref{eqn:exp_design}, we can check the similarity of $x_i$ trajectories with \eqref{eqn:mmd_dyn_sys}.
Following \cite{szabo2018characteristic}, if we use a characteristic kernel in \eqref{eqn:mmd_dyn_sys}, the general properties of the MMD test are still valid given that the initial conditions $\xI(0)$ and $\xII(0)$, respectively the input trajectories $\uI$ and $\uII$, are \iid, which is the case for our design.
That is, $\mmd>0$ suggests that the distributions of the trajectories are different, so we can conclude that there is a causal influence.
However, the other direction is less straightforward: for a system as in \eqref{eqn:gen_sys_nl}, the MMD may be zero, even though variables are dependent, as can be seen in the following example:
\begin{example}
\label{exp:loc_ind}
Assume a control system with $x_1(t+1) = x_1(t)x_2(t)$ and $x_2(t+1) = u(t)$, and an input signal $u(t)$ that is different from $0$.
If we choose $x_1(0)=0$, the trajectory of $x_1$ will, despite the fact that $x_2$ clearly has a causal influence on $x_1$, always be $0$ no matter the initial condition $x_2(0)$.
\end{example}
To address this, we define the concept of local non-causality:
\begin{defi}[Local non-causality]
\label{def:loc_ind}
Let $\mathcal{X}_\mathrm{nc}\subset\mathcal{X}$ and $\mathcal{U}_\mathrm{nc}\subset\mathcal{U}$ with $\mathcal{X}_\mathrm{nc}\cup\mathcal{U}_\mathrm{nc}\neq\varnothing$.
The state variable $x_j$ does \emph{locally not cause} $x_i$ if $\mathbb{P}(x_i\mid \mathrm{do}(x_j(0)=\xI_j(0)))=\mathbb{P}(x_i\mid \mathrm{do}(x_j(0)=\xII_j(0)))$ for all $\xI_j(0)$ and $\xII_j(0)$ given that the sequence $x$ is entirely in $\mathcal{X}_\mathrm{nc}$ and the sequence $u$ is entirely in $\mathcal{U}_\mathrm{nc}$.\footnote{In general, there may exist different $\mathcal{X}^{ij}_\mathrm{nc}$ for each combination of $x_i$ and $x_j$ (and likewise for $\mathcal{U}_\mathrm{nc}$).
We can also cover this case. However, we omit it here to simplify notation.}
Similarly, the input variable $u_j$ does locally not cause $x_i$ if $\mathbb{P}(x_i\mid \mathrm{do}(u_j=\uI_j))=\mathbb{P}(x_i\mid \mathrm{do}(u_j=\uII_j))$ for all $\uI_j$ and $\uII_j$ given that $x$ is in $\mathcal{X}_\mathrm{nc}$ and $u$ in $\mathcal{U}_\mathrm{nc}$.
\end{defi}
The non-causality becomes global if $\mathcal{X}_\mathrm{nc} = \mathcal{X}$ and $\mathcal{U}_\mathrm{nc} = \mathcal{U}$.
% Similar notions have, under the term local independence, been discussed in \cite{schweder1970composable,aalen1987dynamic,mogensen2020markov}.
To properly test for causal relations, we need to ensure that the experimental design in \eqref{eqn:exp_design} yields initial conditions and input trajectories that are not inside $\mathcal{X}_\mathrm{nc}$ and $\mathcal{U}_\mathrm{nc}$.
For this, we propose to design experiments based on the estimated model $\hat{f}$.
In particular, we utilize simulated trajectories based on the model $\hat{f}$, which we denote by $(\hat{x}\mid \hat{f})$ and where for each $t$ we have $\hat{x}(t) = \hat{f}(\hat{x}(0), u, \hat{v})$.
We then need to make the following assumption about the system described by \eqref{eqn:gen_sys_nl} and the estimated model $\hat{f}$:
\begin{assume}
\label{assume:no_loc_ind}
Consider a dynamical system as in \eqref{eqn:gen_sys_nl} for which $\mathcal{X}_\mathrm{nc}\cup\mathcal{U}_\mathrm{nc}\neq\varnothing$.
Consider further two independent experimental designs following \eqref{eqn:exp_design} with initial conditions $\xI(0), \xII(0)$ and input trajectories $\uI, \uII$ for the first, and initial conditions $x^\mathrm{III}(0), x^\mathrm{IV}(0)$ and input trajectories $u^\mathrm{III},u^\mathrm{III}$ for the second experiment.
Assume that all individual inputs $u(t)$ that are part of $u^\mathrm{III}$ or $u^\mathrm{IV}$ are in $\mathcal{U}_\mathrm{nc}$ and that all states $x(t)$ that are part of the simulated trajectories $(\hat{x}^\mathrm{III}\mid\hat{f})$ or $(\hat{x}^\mathrm{IV}\mid\hat{f})$ are inside $\mathcal{X}_\mathrm{nc}$.
Further, assume that for the first experiment there exists some $u(t)$ or some $x(t)$ that is not part of $\mathcal{U}_\mathrm{nc}$ respectively $\mathcal{X}_\mathrm{nc}$.
For such cases, we assume $\mmd(\hat{x}_i^\mathrm{III},\hat{x}_i^\mathrm{IV}\mid\hat{f})<\mmd(\hat{x}_i^\mathrm{I},\hat{x}_i^\mathrm{II}\mid\hat{f})$.
\end{assume}
This assumption \emph{does not} require that $\hat{f}$ captures the causal structure.
That is, if variable $x_j$ does not cause variable $x_i$, the model $\hat{f}$ may still suggest that they are causally related---such modeling errors will then be accounted for through the identification procedure we propose in this paper.
However, we require that $\hat{f}$ captures that the influence of $x_j$ on $x_i$ is lower in regions in which they are locally non-causal.
Intuitively, the assumption says that if $x_i$ does not influence $x_j$ in certain parts of the state/action space, our model may, due to spurious correlation, still assign some influence of $x_i$ on $x_j$ in those regions. 
Nevertheless, we expect the model to assign a stronger influence in regions where the influence stems not only from spurious correlations but also from an actual physical influence of $x_i$ on $x_j$.
Given a reasonable choice of system identification or model learning technique and a sufficiently rich excitation signal, this assumption will typically be satisfied in practice, as shown in the example below.
It further follows that simulated trajectories in regions of local non-causality have a smaller MMD than trajectories in other regions.
\addtocounter{example}{-1}
\begin{example}[cont]
Given assumption~\ref{assume:no_loc_ind}, if we simulate experiments using \eqref{eqn:exp_design_state} and compute the MMD for the resulting $\hat{x}_1$, the MMD will be \emph{lower} if we choose $\hat{x}_1(0)=0$ than for any other choice of $\hat{x}_1(0)$.
To validate assumption~\ref{assume:no_loc_ind}, we identify the system using a regularization based nonlinear system identification algorithm called sparse identification of nonlinear dynamics (SINDy) \citep{brunton2016discovering}.
By exciting the system for \num{500} time steps and using the SINDy implementation from \cite{desilva2020pysindy}, we receive a model.
Simulating the MMD for different initial conditions $x_1(0)$ for $x_2^\mathrm{I}(0)=-10$ and $x_2^\mathrm{II}(0)=10$ then reveals that the MMD has a minimum at $x_1(0)=0$ (\cf \figref{fig:ex1_mmd}), \ie the identified model satisfies assumption~\ref{assume:no_loc_ind}.
We will introduce the finite sample approximation for the MMD used in \figref{fig:ex1_mmd} in \secref{sec:finite_samples} and we provide a further example to empirically verify assumption~\ref{assume:no_loc_ind} for a system with a hysteresis in appendix~\ref{sec:example_loc_non_caus}.
\end{example}
\begin{figure}
\centering
\begin{tikzpicture}

\definecolor{color0}{rgb}{0.12156862745098,0.466666666666667,0.705882352941177}

\begin{axis}[
ylabel={$\mmdsq$},
xlabel={$x_1(0)$},
label style={font=\scriptsize},
tick label style={font=\scriptsize},
axis lines=left,
grid=both,
width=0.7\textwidth,
height=0.2\textheight,
ymax=0.0035,
]
\addplot [semithick, color0]
table {%
-10 0.00269285258202485
-9.8 0.00269285258202485
-9.6 0.00269285258202485
-9.4 0.00269285258202485
-9.2 0.00269285258202485
-9 0.00269285258202485
-8.8 0.00269285258202485
-8.6 0.00269285258202485
-8.40000000000001 0.00269285258202485
-8.20000000000001 0.00269285258202485
-8.00000000000001 0.00269285258202485
-7.80000000000001 0.00269285258202485
-7.60000000000001 0.00269285258202485
-7.40000000000001 0.00269285258202485
-7.20000000000001 0.00269285258202485
-7.00000000000001 0.00269285258202485
-6.80000000000001 0.00269285258202485
-6.60000000000001 0.00269285258202485
-6.40000000000001 0.00269285258202485
-6.20000000000001 0.00269285258202485
-6.00000000000001 0.00269285258202485
-5.80000000000001 0.00269285258202485
-5.60000000000002 0.00269285258202485
-5.40000000000002 0.00269285258202485
-5.20000000000002 0.00269285258202485
-5.00000000000002 0.00269285258202485
-4.80000000000002 0.00269285258202485
-4.60000000000002 0.00269285258202485
-4.40000000000002 0.00269285258202485
-4.20000000000002 0.00269285258202485
-4.00000000000002 0.00269285258202485
-3.80000000000002 0.00269285258202485
-3.60000000000002 0.00269285258202485
-3.40000000000002 0.00269285258202485
-3.20000000000002 0.00269285258202485
-3.00000000000002 0.00269285258202485
-2.80000000000003 0.00269285258202485
-2.60000000000003 0.00269285258202485
-2.40000000000003 0.00269285258202485
-2.20000000000003 0.00269285258202485
-2.00000000000003 0.00269285258202485
-1.80000000000003 0.00269285258202485
-1.60000000000003 0.00269285258202485
-1.40000000000003 0.00269285258202485
-1.20000000000003 0.00269285258202485
-1.00000000000003 0.00269285258202485
-0.800000000000033 0.00269285258202485
-0.600000000000033 0.00269285258202485
-0.400000000000034 0.00269285258202485
-0.200000000000035 0.00269285258202485
0 0
0.199999999999964 0.00269285258202485
0.399999999999963 0.00269285258202485
0.599999999999962 0.00269285258202485
0.799999999999962 0.00269285258202485
0.999999999999961 0.00269285258202485
1.19999999999996 0.00269285258202485
1.39999999999996 0.00269285258202485
1.59999999999996 0.00269285258202485
1.79999999999996 0.00269285258202485
1.99999999999996 0.00269285258202485
2.19999999999996 0.00269285258202485
2.39999999999996 0.00269285258202485
2.59999999999996 0.00269285258202485
2.79999999999995 0.00269285258202485
2.99999999999995 0.00269285258202485
3.19999999999995 0.00269285258202485
3.39999999999995 0.00269285258202485
3.59999999999995 0.00269285258202485
3.79999999999995 0.00269285258202485
3.99999999999995 0.00269285258202485
4.19999999999995 0.00269285258202485
4.39999999999995 0.00269285258202485
4.59999999999995 0.00269285258202485
4.79999999999995 0.00269285258202485
4.99999999999995 0.00269285258202485
5.19999999999995 0.00269285258202485
5.39999999999995 0.00269285258202485
5.59999999999994 0.00269285258202485
5.79999999999994 0.00269285258202485
5.99999999999994 0.00269285258202485
6.19999999999994 0.00269285258202485
6.39999999999994 0.00269285258202485
6.59999999999994 0.00269285258202485
6.79999999999994 0.00269285258202485
6.99999999999994 0.00269285258202485
7.19999999999994 0.00269285258202485
7.39999999999994 0.00269285258202485
7.59999999999994 0.00269285258202485
7.79999999999994 0.00269285258202485
7.99999999999994 0.00269285258202485
8.19999999999994 0.00269285258202485
8.39999999999993 0.00269285258202485
8.59999999999993 0.00269285258202485
8.79999999999993 0.00269285258202485
8.99999999999993 0.00269285258202485
9.19999999999993 0.00269285258202485
9.39999999999993 0.00269285258202485
9.59999999999993 0.00269285258202485
9.79999999999993 0.00269285258202485
};
\end{axis}

\end{tikzpicture}
\caption{MMD of simulated experiments with different initial conditions $x_1(0)$ for the system from example~\ref{exp:loc_ind}. \capt{The MMD has a minimum at $x_1(0)=0$, \ie it reflects the local non-causality. The finite sample approximation $\mmdsq$ of the MMD used in this figure will be introduced in \secref{sec:finite_samples}.}}
\label{fig:ex1_mmd}
\end{figure}
We now specify the experimental design.
To avoid regions of local non-causality, 
% and to have a higher certainty for the hypothesis test, 
we propose to maximize the MMD given the model estimate.
Thus, for checking whether $x_j$ causes $x_i$, we choose input trajectories of length $T$ and initial conditions that solve
\begin{subequations}
\label{eqn:max_mmd}
\begin{align}
\label{eqn:max_mmd_state}
\begin{split}
&\max_{\xI(0),\xII(0),\uI,\uII} \mmd(\hat{x}^\mathrm{I}_i,\hat{x}^\mathrm{II}_i\mid\hat{f})\\
&\text{subject to } \xI_\ell(0) = \xII_\ell(0)\text{ for all } \ell \neq j 
 \quad \uI_\ell(t) = \uII_\ell(t)\text{ for all } \ell,t .
 \end{split}
\end{align}
We will discuss how to handle this optimization problem in practice in \secref{sec:framework}.
In particular, in practice, we typically do not require a global solution to \eqref{eqn:max_mmd_state}.
If we want to check whether $u_j$ causes $x_i$, we choose input trajectories and initial conditions by solving
\begin{align}
\label{eqn:max_mmd_input}
\begin{split}
&\max_{\xI(0),\xII(0),\uI,\uII} \mmd(\hat{x}^\mathrm{I}_i,\hat{x}^\mathrm{II}_i\mid\hat{f})\\
&\text{subject to }\xI_\ell(0) = \xII_\ell(0)\text{ for all } \ell \quad
 \uI_\ell(t) = \uII_\ell(t)\text{ for all } \ell\neq j,t .
 \end{split}
\end{align}
\end{subequations}
Before stating our main theorem, we need one further assumption.
Given the system model from \eqref{eqn:gen_sys_nl}, we could also have systems for which the influence of $x_j$ on $x_i$ becomes only apparent after a certain number of time steps. 
To guarantee that we identify the true causal structure, we need to assume that this delay is at most equal to the length of an experiment $T$.
\begin{assume}
\label{ass:delay}
Consider a pair $(x_i, x_j)$ for which $x_j$ has a causal influence on $x_i$ as per definition~\ref{def:loc_ind}.
We assume that changes of $x_j$, when outside of regions of local non-causality, cause a change in $x_i$ after at most $T$ discrete time steps.
\end{assume}
We can now state the main theorem:
\begin{theo}
\label{thm:general_result}
Consider a completely $\epsilon$-controllable system described by \eqref{eqn:gen_sys_nl} with $\epsilon=0$ that fulfills assumptions~\ref{assume:no_loc_ind} and~\ref{ass:delay}.
Let experiments be designed according to \eqref{eqn:max_mmd} for a fixed experiment length $T$ and repeated infinitely often (\ie we have $\textbf{x}_i^\mathrm{I}, \textbf{x}_i^\mathrm{II}\in\R^{m\times T}$ with $m\to\infty$). Then:  $\mmd(\textbf{x}^\mathrm{I}_i,\textbf{x}^\mathrm{II}_i)=0$ if, and only if, $x_j$ respectively $u_j$ does not cause $x_i$ as per definition~\ref{def:independence}.
\end{theo}
\begin{proof}
Let variables be non-causal.
Then, we have by definition~\ref{def:independence}, $\mathbb{P}(x_i\mid \mathrm{do}(x_j(0)=\xI_j(0)))=\mathbb{P}(x_i\mid \mathrm{do}(x_j(0)=\xII_j(0)))$.
That is, the distribution of $x_i$ in both experiments is equal. 
Thus, $\mmd(\textbf{x}^\mathrm{I}_i,\textbf{x}^\mathrm{II}_i)=0$ follows from \cite{gretton2012kernel}.
Now, assume $\mmd(\textbf{x}^\mathrm{I}_i,\textbf{x}^\mathrm{II}_i)=0$.
This implies that the distribution of $x_i$ is equal in both experiments \citep{gretton2012kernel}, \ie $\mathbb{P}(x_i\mid \mathrm{do}(x_j(0)=\xI_j(0)))=\mathbb{P}(x_i\mid \mathrm{do}(x_j(0)=\xII_j(0)))$.
This could be the case because \emph{(i)} $x_i$ and $x_j$ are non-causal or \emph{(ii)} we have that $x\in\mathcal{X}_\mathrm{nc}$ and $u\in\mathcal{U}_\mathrm{nc}$.
Due to assumption~\ref{assume:no_loc_ind} and \eqref{eqn:max_mmd}, there exists a $t$ for which either $x\notin\mathcal{X}_\mathrm{nc}$ or $u\notin\mathcal{U}_\mathrm{nc}$.
Thus, we are in case \emph{(i)} and $x_j$ does not cause $x_i$.
The proof for $u_j$ and $x_i$ follows analogously.
\end{proof}
\begin{remark}
Assumption~\ref{assume:no_loc_ind} ensures that, if we design experiments using the optimization algorithm in~\eqref{eqn:max_mmd}, we are not inside a region of local non-causality.
However, even if we weaken the assumption, the claim of theorem~\ref{thm:general_result} would still hold.
Since \eqref{eqn:max_mmd} finds the experiment design that maximizes the MMD, the algorithm only fails if our model estimate assumes the influence of one variable on the other to be strongest in regions where there is actually no causal influence.
Given a reasonable model class and excitation signal for the initial model estimation, this should not occur in practice.
We still keep assumption~\ref{assume:no_loc_ind} in its stronger form as it allows us to also make causal inference in practical settings where we may not be able to globally solve \eqref{eqn:max_mmd}.
\end{remark}
\begin{remark}
Instead of doing multiple experiments of fixed length, one might consider doing one long experiment.
This design has several disadvantages.
First, if the process is stable, the probability distribution can become independent of the initial conditions over time.
Then, causal influences are only visible during the transient.
Second, if we deal with non-ergodic systems, where time and spatial average are not the same, we only have a valid testing procedure if we do multiple runs of each experiment.
Lastly, doing multiple experiments ensures that the different runs are independent of each other.
Data generated by a single long experiment can be correlated, and obtaining a valid test is way more involved \citep{solowjow2020kernel}.
\end{remark}

\subsection{\texorpdfstring{$\epsilon$}{e}-Controllability}
\label{sec:e-controllability}
For a stochastic system as in \eqref{eqn:gen_sys_nl}, it is in general impossible to steer the system exactly to the initial conditions suggested by \eqref{eqn:max_mmd}; \ie we need to resort to controllability with $\epsilon>0$ (\cf definition~\ref{def:eps_controllability}). 
Nevertheless, even in such cases, it is still possible to guarantee the consistency of the causality testing procedure. 
However, we need a stricter definition of controllability.
\begin{defi}
\label{def:distr_controllability}
Let the system given in \eqref{eqn:gen_sys_nl} be $\epsilon$-controllable according to definition~\ref{def:eps_controllability}, and consider some arbitrary, but fixed $x^*_{\ell,\mathrm{des}}$.  
Then, the system given in \eqref{eqn:gen_sys_nl} is said to be \emph{completely $\epsilon$-controllable in distribution} if, for any $x(0)$ and any $x_\mathrm{des}$ with $x_{\ell,\mathrm{des}}=x^*_{\ell,\mathrm{des}}$, there exists an input sequence $u(0\,:\,t_\mathrm{f})$ such that $x_\ell(t_\mathrm{f})$ always follows the same distribution; 
\ie $\mathbb{P}(x_\ell(t_\mathrm{f}))=\mathbb{P}^*$ for some $\mathbb{P}^*$ that does not depend on $x(0)$ or any component of $x_\mathrm{des}$ except $x_{\ell,\mathrm{des}}=x^*_{\ell,\mathrm{des}}$ and $\mathrm{Pr}\{\norm{x(t_\mathrm{f})-x_\mathrm{des}}_2^2\ge\epsilon\}\le 1-\eta$, with $0<\eta<1$.
\end{defi}
In other words, the definition states that, for any $x(0)$, we can generate input trajectories that guarantee that the fixed component $x^*_{\ell,\mathrm{des}}$ of $x_\mathrm{des}$ is matched in distribution.
Linear systems with additive Gaussian noise that are controllable following \cite{kalman1960contributions} are also controllable in the sense of definition~\ref{def:distr_controllability}, as we show in the appendix.
We further need to make an assumption about the initial conditions suggested by \eqref{eqn:max_mmd}.
To steer the system to those initial conditions without ending up in a region of local non-causality, we need that states that are $\epsilon$-close to those initial conditions are not in a region of local non-causality.
\begin{assume}
\label{assume:local_ind}
For the initial conditions suggested by \eqref{eqn:max_mmd}, we assume that there exist open balls with radius $\sqrt{\epsilon}$ centered around each element of $\xI(0)$ and $\xII(0)$ that are outside of the local non-causality sets.
\end{assume}
This assumption is not very strong since \eqref{eqn:max_mmd} suggests initial conditions for which the influence of $x_j$ respectively $u_j$ on $x_i$ is maximal.
Thus, it is unlikely that these initial conditions are close to regions of local non-causality.
The main reason why this assumption is needed is to exclude corner cases in which the causal influence only exists in single points of the state and input space. 
In such cases, the probability of successfully steering the system to those points is zero.
However, considering the variables as non-causal may then anyway be reasonable.
Equipped with these three assumptions, we can now state:
\begin{cor}
\label{cor:general_result_ext}
Consider a system as in \eqref{eqn:gen_sys_nl} that is completely $\epsilon$-controllable in distribution according to definition~\ref{def:distr_controllability} and fulfills assumptions~\ref{assume:no_loc_ind},~\ref{ass:delay}, and~\ref{assume:local_ind}.
Let experiments be designed as in \eqref{eqn:max_mmd} for a fixed experiment length $T$, trajectories that steer the system to the initial conditions of the experiments be chosen such that $P(\xI_{\ell}(0))\! =\! P(\xII_{\ell}(0))$ for all $\ell\neq j$, and experiments be repeated infinitely often. 
Then: $\mmd(\textbf{x}^\mathrm{I}_i,\textbf{x}^\mathrm{II}_i)=0$ if, and only if, $x_j$ respectively $u_j$ does not cause $x_i$ according to Definition~\ref{def:independence}.
\end{cor}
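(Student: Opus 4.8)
The plan is to reduce the statement to Theorem~\ref{thm:general_result} by isolating the two features that distinguish the $\epsilon$-controllable setting from the exact one: first, we can no longer force the non-tested components of the initial condition to coincide pointwise, only in distribution; and second, the steering that keeps the experiments out of the regions of local non-causality is now accurate only up to an $\epsilon$-ball, so we must certify that this ball still lies inside a causally active region. I describe the argument for the state-causality test~\eqref{eqn:max_mmd_state}; the input-causality test~\eqref{eqn:max_mmd_input} is handled symmetrically.

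For the forward direction, suppose $x_j$ does not cause $x_i$ in the sense of Definition~\ref{def:independence}. Then $f_i$ is invariant under changes of $x_j(0)$, so in either experiment $x_i(t)=f_i(x_{1,\ldots,n\setminus j}(0),u,v)$ with the $x_j(0)$-argument dropped. By hypothesis the steering trajectories are chosen with $P(\xI_{\ell}(0))=P(\xII_{\ell}(0))$ for all $\ell\neq j$, so the law of the retained arguments $x_{1,\ldots,n\setminus j}(0)$ agrees across the two experiments; the inputs are constrained to be identical by~\eqref{eqn:max_mmd_state}, and $v$ is an independent sequence with the same law. Pushing these matching laws through the fixed map $f_i$ yields identical distributions for $\xI_i$ and $\xII_i$, whence $\mmd(\xI_i,\xII_i)=0$ by~\cite{gretton2012kernel}.

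For the converse, assume $\mmd(\xI_i,\xII_i)=0$, so that the laws of $\xI_i$ and $\xII_i$ coincide~\cite{gretton2012kernel}. As in the proof of Theorem~\ref{thm:general_result}, Assumption~\ref{assume:no_loc_ind} guarantees that $\hat{f}$ assigns a strictly smaller MMD to trajectories confined to regions of local non-causality, so the maximization~\eqref{eqn:max_mmd} drives the chosen initial conditions and inputs into the causally active set $\mathcal{X}\setminus\mathcal{X}_\mathrm{ind}$. The new ingredient is Assumption~\ref{assume:local_ind}: because every connected component of $\mathcal{X}\setminus\mathcal{X}_\mathrm{ind}$ contains, around each of its points, a sphere of radius at least $\sqrt{\epsilon}$ lying inside the component, aiming the controller at such a center keeps the realized state inside the component with probability at least $\eta$, since $\epsilon$-controllability only displaces it by $\norm{x(t_\mathrm{f})-x_\mathrm{des}}_2<\sqrt{\epsilon}$ with probability $\mathrm{Pr}\ge\eta$. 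Hence, even with $\epsilon>0$, the experiments genuinely sample a causally active region, and equality of the $x_i$-laws there can only be explained by global non-causality in the sense of Definition~\ref{def:independence}.

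I expect the main obstacle to be making the distributional-matching step of the forward direction fully rigorous. Definition~\ref{def:distr_controllability} controls only the marginal law of a single steered component, whereas the pushforward argument needs the \emph{joint} law of all $x_{1,\ldots,n\setminus j}(0)$ to agree; one must therefore argue, or additionally assume, that the steering can be arranged so that this joint law, together with the shared input and noise statistics, is reproduced across the two experiments. A secondary, more technical point is checking that ``landing inside a $\sqrt{\epsilon}$-ball with probability at least $\eta$'' composes correctly with the infinite repetition of experiments, so that the limiting empirical law of $x_i$ is the law conditioned on the causally active region rather than a contaminated mixture of in-region and out-of-region realizations.
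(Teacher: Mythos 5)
Your proof follows essentially the same route as the paper's: the forward direction pushes the matched initial-condition distributions (together with identical inputs and the common noise law) through $f_i$ to conclude equal trajectory laws and hence $\mmd(\xI_i,\xII_i)=0$, and the converse uses Assumption~\ref{assume:no_loc_ind} to place the designed experiments outside $\mathcal{X}_\mathrm{ind}$ and Assumption~\ref{assume:local_ind} to guarantee the steering actually reaches those causally active regions. The two gaps you flag at the end --- that the hypothesis matches only the component-wise laws of $x_\ell(0)$ while the pushforward argument needs the joint law, and that imperfect steering could contaminate the limiting law with out-of-region realizations --- are genuine, but they are equally present and unaddressed in the paper's own (much terser) proof, so your attempt is, if anything, more careful than the original.
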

\begin{proof}
Let variables be non-causal. 
Then, we have $\mathbb{P}(\textbf{x}^\mathrm{I}_{\ell}(0)) = \mathbb{P}(\textbf{x}^\mathrm{II}_{\ell}(0))$ for all $\ell\neq j$, thus, also the distribution of the obtained $x_i$ trajectories is equal and we have $\mmd(\textbf{x}^\mathrm{I}_i,\textbf{x}^\mathrm{II}_i)=0$ \citep{gretton2012kernel}.
Now, assume $\mmd(\textbf{x}^\mathrm{I}_i,\textbf{x}^\mathrm{II}_i)=0$.
This implies that the distribution of $x_i$ is equal in both experiments \citep{gretton2012kernel}. 
By assumption~\ref{assume:no_loc_ind}, existing local non-causalities are reflected by the model and thus,  \eqref{eqn:max_mmd} will suggest experiments outside of such regions.
Assumption~\ref{assume:local_ind} ensures that we can steer the system to those regions.
Thus, if distributions are equal, non-causality must be global as in definition~\ref{def:independence}.
\end{proof} 

\subsection{Linear Systems}

Local non-causality as in definition~\ref{def:loc_ind} is a nonlinear phenomenon.
If we assume \eqref{eqn:gen_sys_nl} to be linear time-invariant (LTI) with Gaussian noise $v(t)$, we can reveal the true causal structure without the optimization procedure in \eqref{eqn:max_mmd}, making this case substantially easier.
For an LTI system, \eqref{eqn:gen_sys_nl} reads
\begin{align}
\label{eqn:loc_sys_lin}
 % x(t+1) = Ax(t) + Bu(t) + v(t),
 x(t) = A^tx(0)+ \sum_{i=0}^{t-1} A^i(Bu(t-1-i) + v(t-1-i)),
\end{align}
with state transition matrix $A\in\R^{n\times n}$, input matrix $B\in\R^{n\times m}$, and $v(t)\sim\mathcal{N}(0,\Sigma_\mathrm{v})$.
The system in \eqref{eqn:loc_sys_lin} is controllable as per definition~\ref{def:distr_controllability} if it satisfies the classical controllability condition from \cite{kalman1960contributions}, \ie if the matrix $\begin{pmatrix}B&AB&\ldots&A^{n-1}B\end{pmatrix}$ has full row rank, as we show in lemma~\ref{lem:controllability_linear} in the appendix.
We can then state the following theorem, whose proof is provided in the appendix:
\begin{theo}
\label{thm:linear_result}
Assume an LTI system as in \eqref{eqn:loc_sys_lin}, whose $A$ and $B$ matrices satisfy Kalman's controllability condition.
Let experiments be designed as in \eqref{eqn:exp_design_state} and \eqref{eqn:exp_design_inp}, respectively.
Then:
$\mmd(\textbf{x}^\mathrm{I}_i,\textbf{x}^\mathrm{II}_i)=0$ if, and only if, $x_j$ respectively $u_j$ does not cause $x_i$ as per definition~\ref{def:independence}.
\end{theo}

\subsection{Test with Finite Samples}
\label{sec:finite_samples}

Until here, we derived guarantees in the infinite sample limit.
In practice, we can only carry out finitely many experiments, \ie we have finitely many samples of the random variable sequence $x_i$.
Thus, we need a finite sample approximation of the MMD.
\begin{lem}
\label{lem:finite_sample_approx}
Consider $m$ experiments with fixed length $T$, \ie $\textbf{x}_i\in\R^{m\times T}$ but now we also have $m<\infty$.
An unbiased empirical estimate of the squared population MMD can be computed as
\begin{align}
\label{eqn:mmd_kernel}
\mmdsq(\textbf{x}^\mathrm{I}_i,\textbf{x}^\mathrm{II}_i) = \frac{1}{m(m-1)}\sum_{r\neq s}^m(k(\prescript{r}{}{x}_i^{\mathrm{I}},\prescript{s}{}{x}_i^{\mathrm{II}})+k(\prescript{r}{}{x}_i^{\mathrm{II}},\prescript{s}{}{x}_i^{\mathrm{II}})
-k(\prescript{r}{}{x}_i^{\mathrm{I}},\prescript{s}{}{x}_i^{\mathrm{II}})-k(\prescript{s}{}{x}_i^{\mathrm{I}},\prescript{r}{}{x}_i^{\mathrm{II}})),
\end{align}
where $\prescript{r}{}{x}_i^{\mathrm{I}}$ denotes element $r$ of the $x_i$ trajectories from experiment~I.
\end{lem}
\begin{proof}
The $m$ trajectories for $\xI$ and $\xII$ are independent and follow $\mathbb{P}^\mathrm{I}$ and $\mathbb{P}^\mathrm{II}$, respectively.
Thus, we are in the same setting as in \cite[lem.~6]{gretton2012kernel} and the proof follows as shown therein.
\end{proof}
For a finite sample approximation, we, in general, have $\mathrm{MMD}^2(\textbf{x}_i^\mathrm{I},\textbf{x}_i^\mathrm{II})>0$ even if $\mathbb{P}^\mathrm{I}=\mathbb{P}^\mathrm{II}$.\footnote{Note that the unbiased estimate in lemma~\ref{lem:finite_sample_approx} can even become negative, \cf the discussion after lemma~6 in \cite{gretton2012kernel}.}
Thus, we need to do a hypothesis test and derive a test statistic.
We assume the null hypothesis
\begin{align}
\label{eqn:null}
H_0:\quad \mathbb{P}^\mathrm{I}=\mathbb{P}^\mathrm{II}
\end{align}
and obtain the test statistic from the following result.
\begin{lem}
\label{lem:test_stat}
Assume $0\le k(\prescript{r}{}{x}_i^{\mathrm{I}},\prescript{s}{}{x}_i^{\mathrm{II}})\le K$. Then
\[
\mathrm{Pr}_{\textbf{x}_i^\mathrm{I},\textbf{x}_i^\mathrm{II}}\{\mmdsq(\textbf{x}_i^\mathrm{I},\textbf{x}_i^\mathrm{II})-{\mmd}^2(\mathbb{P}^\mathrm{I},\mathbb{P}^\mathrm{II})>\gamma\}\le\exp\left(\frac{-\gamma^2 m_2}{8K^2}\right),
\]
where $m_2\coloneqq \lfloor m/2\rfloor$. The hypothesis test of level $\alpha$ for the null hypothesis in \eqref{eqn:null} has the acceptance region
\[
       \mmdsq(\textbf{x}^\mathrm{I}_i,\textbf{x}^\mathrm{II}_i) < \left(\frac{4K}{\sqrt{m}}\sqrt{\log(\alpha^{-1})}\right).
\]
\end{lem}
\begin{proof}
The $m$ trajectories for $\xI$ and $\xII$ are independent and follow $\mathbb{P}^\mathrm{I}$ and $\mathbb{P}^\mathrm{II}$, respectively.
Thus, the proof follows from theorem~10 and corollary~11 in \cite{gretton2012kernel}.
\end{proof}

\section{Implementation}
\label{sec:implementation}
The results in \secref{sec:tec_res} show that we are able to detect whether the variables $x_j$ or $u_j$ have a causal influence on $x_i$.
In practical implementations, two challenges remain: first, we want to minimize the number of experiments we need to carry out on the physical platform. 
Second, we may be unable to obtain a global solution to \eqref{eqn:max_mmd}.

\subsection{Heuristic Test with Finite Samples}
\label{sec:heuristic test}
The test statistic provided in lemma~\ref{lem:test_stat} enjoys a theoretical foundation.
However, the threshold decreases as $m^{-\sfrac{1}{2}}$, \ie we need many experiments to not be overly conservative.
While more efficient test statistics exist (\eg achieved through subsampling as discussed in section~6 of \citet{gretton2012kernel}), generating all this data through experiments on real-world systems is often undesirable, \eg because it is time-consuming and may cause excessive wear and tear on the hardware.
Thus, we propose an alternative test statistic that can be obtained from the model estimate $\hat{f}$.
This alternative test statistic is heuristic and forgoes the theoretical properties but is efficient to implement and yields good results in practice, as we show in \secref{sec:eval}.

We estimate a model $\hat{f}_{i,\mathrm{nc}}$ that assumes $x_i$ and $x_j$ respectively $u_j$ are non-causal (\ie we do not use the data $x_j$ respectively $u_j$ when estimating $\hat{f}_{i,\mathrm{nc}}$).
We propose to use this model to decide whether to accept the null hypothesis of $x_i$ and $x_j$ respectively $u_j$ being non-causal. 
That is, we replace our current model $\hat{f}_i$ for state component $x_i$ with $\hat{f}_{i,\mathrm{nc}}$ if
\begin{align}
\label{eqn:test_statistic}
\begin{split}
\mmdsq(\xI_i,\xII_i) < &\E[\mmdsq(\hat{x}^\mathrm{I}_i,\hat{x}^\mathrm{II}_i\mid\hat{f}_{i,\mathrm{nc}})]
+\nu\sqrt{\Var[\mmdsq(\hat{x}^\mathrm{I}_i,\hat{x}^\mathrm{II}_i\mid\hat{f}_{i,\mathrm{nc}})]}.
\end{split}
\end{align}
Expected value and variance in \eqref{eqn:test_statistic} can be estimated through Monte Carlo simulations.
For these simulations, we use the true initial conditions $\xI(0)$ and $\xII(0)$. 
That way, we account for uncertainty due to unequal initial conditions between experiments.
The significance level of the test can be adjusted through $\nu$ using Chebyshev's inequality \citep{chebyshev1874sur}.

\subsection{Experimental Design}
\label{sec:framework}

The framework for testing causality between state variables is summarized in \algref{alg:ext_approach} (and works analogously for inputs).
After having obtained an initial model (see ll.~1-2 in \algref{alg:ext_approach}), we run \eqref{eqn:max_mmd_state} to find initial conditions and input trajectories for testing non-causality of one specific combination of $x_\ell$ and $x_j$ (l.~6).
The optimization in \eqref{eqn:max_mmd_state} may be arbitrarily complex or even intractable, depending on the chosen model class.
However, finding a global optimum of \eqref{eqn:max_mmd_state} is not necessary.
The goal of the optimization procedure is to avoid regions of local non-causality.
We thus optimize \eqref{eqn:max_mmd_state} until it is above a threshold $\delta_1$ to be confident that we are not in a region of local non-causality.
In practical applications, we can often already achieve this through a reasonable initialization of the optimization problem, \ie by choosing initial conditions for $x_j$ as far apart as possible.
We then run the designed experiment and collect the data (l.~7).
Ideally, we would like to use data from this single experiment to test for the causal influence of $x_j$ on all other state components.
Thus, we check for which $x_i$ the experiment yields an expected MMD that is higher than a second threshold $\delta_2$ and do the hypothesis test for all of those (ll.~9-13). 

\begin{algorithm}
\small
\caption{Pseudocode of the proposed framework.}
\label{alg:ext_approach}
\begin{algorithmic}[1]
\State Excite system with input signal, collect data
\State Obtain $\hat{f}$ through black-box system identification
\For {$x_j$ in $x$}
\State $x_\mathrm{test} = [x_1,\ldots,x_n]$ 
\Comment {\parbox[t]{.41\linewidth}{states to be tested for non-causality}} 
\For {$x_\ell$ in $x_\mathrm{test}$} 
\Comment {\parbox[t]{.41\linewidth}{design experiment to test whether $x_j$ causes $x_\ell$}}
\State Run \eqref{eqn:max_mmd_state} until $ \E[\mmdsq(\hat{x}^\mathrm{I}_\ell,\hat{x}^\mathrm{II}_\ell\mid\hat{f})]>\delta_1$
\State Run causal experiments, collect data
\For {$x_i$ in $x_\mathrm{test}$}
\Comment{\parbox[t]{.41\linewidth}{The experimental designed for $x_j$ and $x_\ell$ might also yield a valid test for other $x$}}
\If {$ \E[\mmdsq(\hat{x}^\mathrm{I}_i,\hat{x}^\mathrm{II}_i\mid\hat{f})]>\delta_2$}
\Comment{\parbox[t]{.41\linewidth}{If the empirical MMD is above the threshold for some other $x_i$, also test that $x_i$}}
\State Obtain $\hat{f}_{i,\mathrm{nc}}$
\State Obtain test statistic via Monte Carlo simulations
\If {\eqref{eqn:test_statistic} holds}  
\Comment{\parbox[t]{.41\linewidth}{ independence test}}
\State $\hat{f}_i=\hat{f}_{i,\mathrm{nc}}$
\EndIf
\State Delete $x_i$ from $x_\mathrm{test}$
\EndIf
\EndFor
\EndFor
\EndFor
\end{algorithmic}
\end{algorithm}

\section{Evaluation}
\label{sec:eval}
We evaluate the framework on three systems.
First, we identify the causal structure of one arm of the robot 
Apollo \citep{kappler2018real} 
shown in \figref{fig:robot} in \secref{sec:robot}.
Then, we demonstrate the causal identification of a simulated quadruple tank process (\cf \figref{fig:4tank}) in \secref{sec:4tank}.
In both cases, we present the general setup and results in the following, while we defer implementation details to the appendix.
Lastly, we discuss a synthetic linear toy example.
We use this example to highlight once more the importance of considering a notion of controllability and to compare our method to a sparse identification and a causal discovery algorithm\footnote{Code for both simulation examples is available at 
\url{https://github.com/baumanndominik/identifying_causal_structure}.
}.
For all experiments, we use a Gaussian kernel to compute the MMD.

\subsection{Robot Experiments}
\label{sec:robot}

We consider kinematic control of the robot; that is, we can command desired angular velocities to the joints, which are then tracked by low-level controllers (taking care, among other things, of the robot dynamics) \citep{o80}.
As measurements, we receive the joint angles.
The goal of the causal identification is to learn which joints influence each other and which joints are influenced by which inputs.
We consider four joints of the robot arm in the following experiments.  
From the robot arm's design, we know its kinematic structure, which is described by $\dot{\phi}_i = u_i$ for each joint, where $\phi_i$ denotes the angle of joint $i$ and $u_i$ the control input.  
That is, we expect each joint velocity $\dot{\phi}_i$ to depend only on the local input $u_i$ and not other variables.  
While the dynamics are approximately linear, we do not rely on this information and are thus in the setting discussed in \secref{sec:e-controllability}.
In the following, we will investigate whether our proposed causal identification can automatically reveal this structure.

\begin{figure}
\centering
\includegraphics[width=0.4\textwidth]{img/robot_caus_1_reduced.png}
\hspace{1cm}
\includegraphics[width=0.4\textwidth]{img/robot_caus_2_reduced.png}
\caption{The robot showing initial postures for two experiments.}
\label{fig:robot}
\end{figure}

Following \algref{alg:ext_approach}, we start by identifying a model $\hat{f}$.
In this experiment, we estimate a linear state-space model.
As expected, the initial model suggests that all joints are linked to each other and all inputs due to spurious correlations.
We then design experiments for causality testing, example trajectories of such experiments are shown in \figref{fig:exp_eval} (left).
The empirical squared MMD of the resulting trajectories is compared with the test statistic.
The trajectories in \figref{fig:exp_eval} (left) already suggest that the experiments are in line with the kinematic model: while the two trajectories of joint 1 for different initial conditions of joint 3 are essentially equal (blue dashed and green dotted lines overlap), the trajectories of joint 3 for different choices of the third input are fundamentally different.
This is also revealed through the proposed causality test.
The middle plots of \figref{fig:exp_eval} show the empirical squared MMD (left-hand side of \eqref{eqn:test_statistic}) and the test threshold (right-hand side of \eqref{eqn:test_statistic}) for the experiments that were conducted to test the influence of the initial conditions of the third joint (top) and of the third input signal (bottom) on all joints.
As can be seen, the causal identification reveals that the third joint does not influence any other joint, and the third input only affects the third joint.
Note that the third joint's trajectories are obviously different when choosing different initial conditions for the third joint.
However, since this is expected, we subtract the initial condition in this case to investigate whether the movement starting from these distinct initial conditions differs.
The remaining experiments (results are contained in the appendix) yield similar results.
In summary, the causal identification successfully reveals the expected causal structure.

The experiment design in \eqref{eqn:max_mmd} requires us to solve an optimization problem.
Nevertheless, \algref{alg:ext_approach} introduces two parameters, $\delta_1$ and $\delta_2$.
These can be used to stop the optimization early in case the predicted MMD is high enough for us to be confident that we are not in a region of local non-causality ($\delta_1$) and that we can use the design to test for all influences of a joint or input signal ($\delta_2$).
To design $\delta_1$ and $\delta_2$, we can look at the system's noise level and choose them some orders of magnitude higher.
As discussed in \secref{sec:implementation}, a high predicted MMD can often already be achieved through sophisticated guesses, \ie by choosing initial conditions far apart from each other and diverse input signals.
We follow this approach. 
When testing for the influence of the third joint on others and choosing initial conditions that are far apart, we predict MMDs of around $0.5$.
The model we estimate initially for the robot arm has a noise standard deviation below \num{1e-4}.
That is, the predicted MMD is way above the noise level of the system and way above the MMDs we find in experiments.
Thus, for any choice of $\delta_1$ and $\delta_2$ below 0.5, we can confidently accept this experiment design.
If we were even more conservative and chose them above 0.5, we would need to optimize the experiment design further.

To investigate the generalization capability, we compare predictions of the model $\hat{f}_\mathrm{init}$ obtained from the initial system identification and the model $\hat{f}_\mathrm{caus}$ that was learned after revealing the causal structure.
We use the same training data to estimate the model parameters in both cases. 
However, for $\hat{f}_\mathrm{caus}$, we leverage the obtained knowledge of the causal structure when estimating parameters. In contrast, for $\hat{f}_\mathrm{init}$ we do not take any prior knowledge into account.
As test data, we use an experiment that was conducted to investigate the influence of the initial condition of joint 3 on the other joints and let both models predict the trajectory of joint 1.
For this experiment, the initial angle of joint 3 is close to its maximum value, a case that is not contained in the training data.
As can be seen in \figref{fig:exp_eval} (right), the predictions of $\hat{f}_\mathrm{caus}$ (blue) are very close to the true data (green, dashed), \ie the model can generalize well, while the predictions of $\hat{f}_\mathrm{init}$ deviate significantly.

\begin{figure}
\centering
\begin{tabular}{c|c|c}
\scriptsize{Trajectories from causality experiment}&\scriptsize{Causality test}&\scriptsize{Predictions with learned model}\\
\begin{tikzpicture}

\begin{groupplot}
\nextgroupplot[
 enlargelimits=false,
 scale only axis,
 scale=1,
 width=0.2\textwidth,
 height=0.12\textheight,
 label style={font=\scriptsize},
 tick label style={font=\scriptsize},
 axis lines=left,
 grid=both,
 ylabel = Angle Joint 1,
 xlabel = $t$,
 unit markings = parenthesis,
 y unit=rad, 
 % xlabel= $t$
]
\addplot [semithick, blue, dashed]
table {%
0 1.50144
1 1.50144
2 1.50539999771313
3 1.50939997854142
4 1.5133999008405
5 1.51739968397551
6 1.52139919571107
7 1.52539823959549
8 1.52939654235576
9 1.53339374131063
10 1.53738937181195
11 1.54138285472791
12 1.54537348398527
13 1.54936041419201
14 1.55334264836664
15 1.55731902580512
16 1.56128821012221
17 1.56524867750979
18 1.56919870526103
19 1.57313636061615
20 1.57705948999216
21 1.58096570866684
22 1.5848523909944
23 1.58871666123829
24 1.5925553851146
25 1.59636516214768
26 1.60014231894772
27 1.60388290352795
28 1.60758268078752
29 1.61123712929338
30 1.61484143950204
31 1.61839051356918
32 1.62187896690075
33 1.62530113160539
34 1.62865106201187
35 1.63192254241912
36 1.63510909724819
37 1.63820400376632
38 1.64120030755193
39 1.64409084086638
40 1.64686824409319
41 1.64952499039739
42 1.65205341374798
43 1.65444574043262
44 1.65669412417815
45 1.65879068497025
46 1.66072755164294
47 1.66249690828154
48 1.66409104445195
49 1.66550240923427
50 1.66672366899974
51 1.66774776882691
52 1.66856799740497
53 1.66917805522103
54 1.66957212577144
55 1.6697449494781
56 1.6696918999264
57 1.66940906197483
58 1.66889331121607
59 1.66814239419687
60 1.6671550087301
61 1.66593088355665
62 1.66447085653992
63 1.66277695050087
64 1.66085244572929
65 1.65870194813777
66 1.65633145196051
67 1.65374839584076
68 1.65096171110062
69 1.64798186094612
70 1.64482086933128
71 1.64149233818924
72 1.63801145173757
73 1.63439496658243
74 1.63066118638136
75 1.62682991988262
76 1.62292242123859
77 1.61896131159603
78 1.61497048109711
79 1.61097497058362
80 1.60700083248415
81 1.60307497057986
82 1.59922495859013
83 1.5954788377938
84 1.59186489420381
85 1.58841141614288
86 1.58514643342087
87 1.58209743969034
88 1.57929109995003
89 1.57675294557293
90 1.57450705965115
91 1.57257575586722
92 1.57097925451426
93 1.56973535968767
94 1.56885914205002
95 1.56836263191961
96 1.56825452774129
97 1.5685399252559
98 1.56922007288087
99 1.57029215893817
};
% \addlegendentry{Exp. I};
\addplot [semithick, green!50!black, dotted]
table {%
0 1.50144
1 1.50144
2 1.50539999771313
3 1.50939997854142
4 1.5133999008405
5 1.51739968397551
6 1.52139919571107
7 1.52539823959549
8 1.52939654235576
9 1.53339374131063
10 1.53738937181195
11 1.54138285472791
12 1.54537348398527
13 1.54936041419201
14 1.55334264836664
15 1.55731902580512
16 1.56128821012221
17 1.56524867750979
18 1.56919870526103
19 1.57313636061615
20 1.57705948999216
21 1.58096570866684
22 1.5848523909944
23 1.58871666123829
24 1.5925553851146
25 1.59636516214768
26 1.60014231894772
27 1.60388290352795
28 1.60758268078752
29 1.61123712929338
30 1.61484143950204
31 1.61839051356918
32 1.62187896690075
33 1.62530113160539
34 1.62865106201187
35 1.63192254241912
36 1.63510909724819
37 1.63820400376632
38 1.64120030755193
39 1.64409084086638
40 1.64686824409319
41 1.64952499039739
42 1.65205341374798
43 1.65444574043262
44 1.65669412417815
45 1.65879068497025
46 1.66072755164294
47 1.66249690828154
48 1.66409104445195
49 1.66550240923427
50 1.66672366899974
51 1.66774776882691
52 1.66856799740497
53 1.66917805522103
54 1.66957212577144
55 1.6697449494781
56 1.6696918999264
57 1.66940906197483
58 1.66889331121607
59 1.66814239419687
60 1.6671550087301
61 1.66593088355665
62 1.66447085653992
63 1.66277695050087
64 1.66085244572929
65 1.65870194813777
66 1.65633145196051
67 1.65374839584076
68 1.65096171110062
69 1.64798186094612
70 1.64482086933128
71 1.64149233818924
72 1.63801145173757
73 1.63439496658243
74 1.63066118638136
75 1.62682991988262
76 1.62292242123859
77 1.61896131159603
78 1.61497048109711
79 1.61097497058362
80 1.60700083248415
81 1.60307497057986
82 1.59922495859013
83 1.5954788377938
84 1.59186489420381
85 1.58841141614288
86 1.58514643342087
87 1.58209743969034
88 1.57929109995003
89 1.57675294557293
90 1.57450705965115
91 1.57257575586722
92 1.57097925451426
93 1.56973535968767
94 1.56885914205002
95 1.56836263191961
96 1.56825452774129
97 1.5685399252559
98 1.56922007288087
99 1.57029215893817
};
% \addlegendentry{Exp. II};
\end{groupplot}

\end{tikzpicture}&
\begin{tikzpicture}
\begin{groupplot}[
  group style={
     group name=second,
     group size=1 by 2,
     vertical sep=0pt,
     x descriptions at=edge bottom},
     enlargelimits = false,
     scale only axis,
     scale=1, 
     width=0.2\textwidth,
     xmin=0,xmax=4.5,
     tick label style={font=\scriptsize},
     label style={font=\scriptsize, xshift=-0.5cm},
     axis lines = left,
     grid=both,
     anchor=north west]

\nextgroupplot[         
 ylabel=$\widehat{\text{MMD}}^2$,
 x axis line style={draw=none},
 x tick style={draw=none},
 axis y discontinuity=parallel,
 ymin=4,
 ytick={5.5},
 yticklabels={5.5e-7},
 xtick={0,1,2,3,4},
 height=0.06\textheight]
\addplot[very thick,black,dashed] table{%
1 5.80702210
2 4.53893178
3 6.15602498
4 5.20301166
 };

\nextgroupplot[
 xlabel={\# Joint},
 axis x line=bottom,
 ytick={-1,0,1},
 yticklabels={-1e-16,0,1e-16},
 xtick={0,1,2,3,4},
 legend style={font=\scriptsize, at={(0.8,0.5)}},
 height=0.06\textheight,]
\addplot[only marks,red,mark=asterisk, mark size=2.5000pt] table{%
 1 0
 2 0
 3 -1.38
 4 1.29
 };

\end{groupplot}
\end{tikzpicture}&
\begin{tikzpicture}

\definecolor{color0}{rgb}{0.12156862745098,0.466666666666667,0.705882352941177}
\definecolor{color1}{rgb}{1,0.498039215686275,0.0549019607843137}
\definecolor{color2}{rgb}{0.172549019607843,0.627450980392157,0.172549019607843}

\begin{groupplot}
\nextgroupplot[
 enlargelimits=false,
 scale only axis,
 scale=1,
 width=0.2\textwidth,
 height=0.12\textheight,
 label style={font=\scriptsize},
 tick label style={font=\scriptsize},
 ymax=10,
 axis lines=left,
 grid=both,
 ylabel = Angle Joint 1,
 unit markings = parenthesis,
 y unit=rad, 
 xlabel= $t$
]
\addplot [blue]
table {%
0 1.50144
1 1.01470189750577
2 0.528550794056191
3 0.0429857499462985
4 -0.441994207729863
5 -0.926390097692413
6 -1.41020299708405
7 -1.89343405406944
8 -2.37608450041922
9 -2.85815566406936
10 -3.33964898164391
11 -3.82056601092508
12 -4.30090844325096
13 -4.78067811581652
14 -5.25987702384864
15 -5.73850733262056
16 -6.21657138926536
17 -6.69407173434193
18 -7.1710111131002
19 -7.64739248638579
20 -8.12321904111667
21 -8.59849420025694
22 -9.07322163220545
23 -9.54740525950843
24 -10.0210492667978
25 -10.494158107848
26 -10.9667365116371
27 -11.4387894872887
28 -11.9103223277638
29 -12.3813406121653
30 -12.8518502065086
31 -13.3218572628076
32 -13.7913682163175
33 -14.2603897807732
34 -14.7289289414559
35 -15.1969929459194
36 -15.6645892922068
37 -16.1317257143867
38 -16.5984101652432
39 -17.0646507959562
40 -17.5304559326157
41 -17.9958340494248
42 -18.460793738455
43 -18.9253436758365
44 -19.3894925842801
45 -19.8532491918536
46 -20.3166221869573
47 -20.7796201694753
48 -21.2422515981106
49 -21.7045247339515
50 -22.1664475803569
51 -22.6280278192938
52 -23.0892727443142
53 -23.5501891904062
54 -24.0107834610215
55 -24.4710612526344
56 -24.9310275772618
57 -25.3906866834366
58 -25.8500419762017
59 -26.3090959367642
60 -26.7678500425262
61 -27.2263046882802
62 -27.6844591094377
63 -28.1423113082268
64 -28.5998579838706
65 -29.0570944678206
66 -29.5140146651819
67 -29.9706110035199
68 -30.4268743902826
69 -30.8827941801056
70 -31.3383581532892
71 -31.7935525067417
72 -32.2483618586735
73 -32.702769268298
74 -33.1567562717448
75 -33.6103029353202
76 -34.0633879271536
77 -34.5159886081459
78 -34.9680811429892
79 -35.4196406318481
80 -35.8706412630892
81 -36.321056487209
82 -36.7708592118459
83 -37.220022017471
84 -37.6685173930312
85 -38.1163179904719
86 -38.5633968967008
87 -39.0097279211636
88 -39.4552858968024
89 -39.9000469917517
90 -40.34398902871
91 -40.7870918085103
92 -41.2293374340042
93 -41.6707106299885
94 -42.1111990545417
95 -42.5507935968113
96 -42.9894886560189
97 -43.4272823962262
98 -43.8641769712579
99 -44.3001787141013
100 -44.7352982851243
};
% \addlegendentry{Exp. I};
\addplot [green!50!black, dashed]
table {%
0 1.50144
1 1.50144
2 1.50539999771313
3 1.50939997854142
4 1.5133999008405
5 1.51739968397551
6 1.52139919571107
7 1.52539823959549
8 1.52939654235576
9 1.53339374131063
10 1.53738937181195
11 1.54138285472791
12 1.54537348398527
13 1.54936041419201
14 1.55334264836664
15 1.55731902580512
16 1.56128821012221
17 1.56524867750979
18 1.56919870526103
19 1.57313636061615
20 1.57705948999216
21 1.58096570866684
22 1.5848523909944
23 1.58871666123829
24 1.5925553851146
25 1.59636516214768
26 1.60014231894772
27 1.60388290352795
28 1.60758268078752
29 1.61123712929338
30 1.61484143950204
31 1.61839051356918
32 1.62187896690075
33 1.62530113160539
34 1.62865106201187
35 1.63192254241912
36 1.63510909724819
37 1.63820400376632
38 1.64120030755193
39 1.64409084086638
40 1.64686824409319
41 1.64952499039739
42 1.65205341374798
43 1.65444574043262
44 1.65669412417815
45 1.65879068497025
46 1.66072755164294
47 1.66249690828154
48 1.66409104445195
49 1.66550240923427
50 1.66672366899974
51 1.66774776882691
52 1.66856799740497
53 1.66917805522103
54 1.66957212577144
55 1.6697449494781
56 1.6696918999264
57 1.66940906197483
58 1.66889331121607
59 1.66814239419687
60 1.6671550087301
61 1.66593088355665
62 1.66447085653992
63 1.66277695050087
64 1.66085244572929
65 1.65870194813777
66 1.65633145196051
67 1.65374839584076
68 1.65096171110062
69 1.64798186094612
70 1.64482086933128
71 1.64149233818924
72 1.63801145173757
73 1.63439496658243
74 1.63066118638136
75 1.62682991988262
76 1.62292242123859
77 1.61896131159603
78 1.61497048109711
79 1.61097497058362
80 1.60700083248415
81 1.60307497057986
82 1.59922495859013
83 1.5954788377938
84 1.59186489420381
85 1.58841141614288
86 1.58514643342087
87 1.58209743969034
88 1.57929109995003
89 1.57675294557293
90 1.57450705965115
91 1.57257575586722
92 1.57097925451426
93 1.56973535968767
94 1.56885914205002
95 1.56836263191961
96 1.56825452774129
97 1.5685399252559
98 1.56922007288087
99 1.57029215893817
};
% \addlegendentry{Exp. II};
\end{groupplot}

\end{tikzpicture}\\
\begin{tikzpicture}

\begin{groupplot}
\nextgroupplot[
enlargelimits=false,
 scale only axis,
 scale=1,
 width=0.2\textwidth,
 height=0.12\textheight,
 label style={font=\scriptsize},
 tick label style={font=\scriptsize},
 legend style={font=\scriptsize, at={(0.9,0.65)}},
 axis lines=left,
 grid=both,
 ylabel = Angle Joint 1,
 unit markings = parenthesis,
 y unit=rad, 
 xlabel= $t$
]
\addplot [semithick, blue, dashed]
table {%
0 0.00144
1 0.00144
2 0.00539999771313287
3 0.00939997854142432
4 0.0133999008405039
5 0.0173996839755139
6 0.0213991957110691
7 0.0253982395954893
8 0.0293965423557641
9 0.033393741310628
10 0.0373893718119475
11 0.0413828547279127
12 0.0453734839852672
13 0.0493604141920088
14 0.0533426483666384
15 0.057319025805118
16 0.0612882101222091
17 0.0652486775097856
18 0.0691987052610349
19 0.0731363606161506
20 0.0770594899921554
21 0.0809657086668376
22 0.084852390994404
23 0.088716661238292
24 0.0925553851145957
25 0.0963651621476811
26 0.100142318947716
27 0.103882903527948
28 0.107582680787524
29 0.111237129293376
30 0.114841439502042
31 0.118390513569175
32 0.121878966900746
33 0.125301131605386
34 0.128651062011871
35 0.13192254241912
36 0.135109097248192
37 0.138204003766322
38 0.141200307551926
39 0.144090840866383
40 0.146868244093188
41 0.149524990397394
42 0.152053413747975
43 0.154445740432619
44 0.156694124178154
45 0.158790684970251
46 0.160727551642937
47 0.162496908281538
48 0.164091044451952
49 0.165502409234268
50 0.166723668999743
51 0.167747768826904
52 0.168567997404969
53 0.169178055221024
54 0.169572125771436
55 0.169744949478096
56 0.169691899926397
57 0.169409061974832
58 0.168893311216066
59 0.168142394196867
60 0.167155008730095
61 0.165930883556645
62 0.164470856539917
63 0.162776950500867
64 0.160852445729285
65 0.158701948137771
66 0.156331451960512
67 0.15374839584076
68 0.150961711100619
69 0.147981860946113
70 0.144820869331283
71 0.141492338189234
72 0.138011451737571
73 0.134394966582426
74 0.130661186381356
75 0.126829919882618
76 0.122922421238588
77 0.118961311596032
78 0.114970481097109
79 0.110974970583622
80 0.107000832484153
81 0.103074970579854
82 0.0992249585901307
83 0.0954788377937949
84 0.0918648942038081
85 0.088411416142877
86 0.0851464334208628
87 0.0820974396903378
88 0.0792910999500311
89 0.0767529455729297
90 0.0745070596511461
91 0.0725757558672153
92 0.070979254514259
93 0.0697353596876672
94 0.068859142050019
95 0.0683626319196105
96 0.0682545277412874
97 0.0685399252559001
98 0.0692200728808664
99 0.070292158938164
};
\addlegendentry{Exp. I};
\addplot [semithick, green!50!black, dotted]
table {%
0 -0.00144
1 -0.00144
2 -0.00539999771313287
3 -0.00939997854142432
4 -0.0133999008405039
5 -0.0173996839755139
6 -0.0213991957110691
7 -0.0253982395954893
8 -0.0293965423557641
9 -0.033393741310628
10 -0.0373893718119475
11 -0.0413828547279127
12 -0.0453734839852672
13 -0.0493604141920088
14 -0.0533426483666384
15 -0.057319025805118
16 -0.0612882101222091
17 -0.0652486775097856
18 -0.0691987052610349
19 -0.0731363606161506
20 -0.0770594899921554
21 -0.0809657086668376
22 -0.084852390994404
23 -0.088716661238292
24 -0.0925553851145957
25 -0.0963651621476811
26 -0.100142318947716
27 -0.103882903527948
28 -0.107582680787524
29 -0.111237129293376
30 -0.114841439502042
31 -0.118390513569175
32 -0.121878966900746
33 -0.125301131605386
34 -0.128651062011871
35 -0.13192254241912
36 -0.135109097248192
37 -0.138204003766322
38 -0.141200307551926
39 -0.144090840866383
40 -0.146868244093188
41 -0.149524990397394
42 -0.152053413747975
43 -0.154445740432619
44 -0.156694124178154
45 -0.158790684970251
46 -0.160727551642937
47 -0.162496908281538
48 -0.164091044451952
49 -0.165502409234268
50 -0.166723668999743
51 -0.167747768826904
52 -0.168567997404969
53 -0.169178055221024
54 -0.169572125771436
55 -0.169744949478096
56 -0.169691899926397
57 -0.169409061974832
58 -0.168893311216066
59 -0.168142394196867
60 -0.167155008730095
61 -0.165930883556645
62 -0.164470856539917
63 -0.162776950500867
64 -0.160852445729285
65 -0.158701948137771
66 -0.156331451960512
67 -0.15374839584076
68 -0.150961711100619
69 -0.147981860946113
70 -0.144820869331283
71 -0.141492338189234
72 -0.138011451737571
73 -0.134394966582426
74 -0.130661186381356
75 -0.126829919882618
76 -0.122922421238588
77 -0.118961311596032
78 -0.114970481097109
79 -0.110974970583622
80 -0.107000832484153
81 -0.103074970579854
82 -0.0992249585901307
83 -0.0954788377937949
84 -0.0918648942038081
85 -0.088411416142877
86 -0.0851464334208628
87 -0.0820974396903378
88 -0.0792910999500311
89 -0.0767529455729297
90 -0.0745070596511461
91 -0.0725757558672153
92 -0.070979254514259
93 -0.0697353596876672
94 -0.068859142050019
95 -0.0683626319196105
96 -0.0682545277412874
97 -0.0685399252559001
98 -0.0692200728808664
99 -0.070292158938164
};
\addlegendentry{Exp. II};
\end{groupplot}

\end{tikzpicture}&
\begin{tikzpicture}

\begin{axis}[%
 enlargelimits = false,
 scale only axis,
 scale=1,
 width=0.2\textwidth,
 height=0.12\textheight,
 label style={font=\scriptsize},
 tick label style={font=\scriptsize},
 legend style={font=\scriptsize, at={(0.8,0.5)}},
 ylabel=$\widehat{\text{MMD}}^2$,
 xlabel= \# Joint,
 axis lines=left, 
xmin=0,xmax=4.5,
 ymin=1e-14,ymax=15,
 grid=both,
 ymode = log,
 xtick={0,1,2,3,4},
 xticklabels={,1,2,3,4},
]
% \addplot [blue,mark=o] table{%
% 1 1.00567859
% 2 1.02837477
% 3 1.16713992
% 4 1.16749357
% };
 \addplot[only marks,red,mark=asterisk, mark size=2.5000pt] table{%
 1 2.31e-13
 2 1.4e-13
 3 0.04
 4 2.25e-12
 };
 \addlegendentry{Exp. MMD};
 \addplot[black,very thick,dashed] table{%
1 5.26239113e-05
2 4.28256910e-05
3 1.17784491e-05
4 4.36026446e-07
 };
\addlegendentry{Test stat.};

% \path[name path=axis] (axis cs:1,1e-14) -- (axis cs:4,1e-14);
% \addplot[name path=f, gray, opacity=0.2] table {
% 1 1.77899046e-05
% 2 1.77899046e-05
% 3 1.77899046e-05
% 4 1.77899046e-05
% };
%  \addplot [
%         thick,
%         color=gray,
%         fill=gray, 
%         fill opacity=0.2
%     ]
% fill between[
%         of=f and axis,
%         soft clip={domain=0:4},
%     ];
\end{axis}
\end{tikzpicture}&
\begin{tikzpicture}

\begin{groupplot}

\nextgroupplot[
 enlargelimits=false,
 scale only axis,
 scale=1,
 width=0.2\textwidth,
 height=0.12\textheight,
 label style={font=\scriptsize},
 tick label style={font=\scriptsize},
 legend style={font=\scriptsize, at={(0.9,0.4)}},
 axis lines=left,
 grid=both,
 xlabel = $t$,
 ylabel = Angle Joint 1,
 y unit=rad, 
]
\addplot [blue]
table {%
0 1.50144
1 1.50644405725358
2 1.5114481065602
3 1.5164521110985
4 1.52145599081078
5 1.52645960661149
6 1.53146274460094
7 1.53646510028967
8 1.54146626284133
9 1.54646569934574
10 1.55146273913711
11 1.55645655817755
12 1.56144616353037
13 1.56643037795388
14 1.57140782465224
15 1.57637691222666
16 1.58133581987754
17 1.58628248291587
18 1.59121457865049
19 1.59612951272624
20 1.60102440599738
21 1.60589608202988
22 1.61074105533604
23 1.61555552045472
24 1.62033534200082
25 1.6250760458175
26 1.6297728113753
27 1.63442046557184
28 1.63901347809603
29 1.64354595852987
30 1.64801165536985
31 1.65240395715806
32 1.65671589592037
33 1.66094015311526
34 1.66506906830134
35 1.66909465073545
36 1.67300859411389
37 1.67680229466955
38 1.68046687283404
39 1.68399319866889
40 1.68737192126113
41 1.69059350226713
42 1.69364825377323
43 1.69652638062296
44 1.69921802733761
45 1.70171332972953
46 1.70400247127568
47 1.70607574428233
48 1.70792361582966
49 1.70953679843859
50 1.71090632534938
51 1.71202363024447
52 1.71288063118448
53 1.71346981845872
54 1.71378434597783
55 1.71381812575823
56 1.71356592496563
57 1.71302346489829
58 1.71218752120103
59 1.71105602450868
60 1.70962816062364
61 1.70790446923788
62 1.70588694011567
63 1.70357910556138
64 1.70098612790855
65 1.69811488068332
66 1.69497402201895
67 1.69157405883198
68 1.6879274002142
69 1.68404839845313
70 1.67995337606697
71 1.67566063723254
72 1.67119046199704
73 1.66656508170105
74 1.66180863410166
75 1.6569470967745
76 1.65200819749353
77 1.64702130043943
78 1.64201726727389
79 1.63702829233816
80 1.63208771149184
81 1.62722978440253
82 1.62248945042709
83 1.61790205859236
84 1.61350307258337
85 1.60932775208001
86 1.60541081224384
87 1.60178606364241
88 1.59848603540321
89 1.59554158490695
90 1.59298149785316
91 1.59083208305216
92 1.58911676680567
93 1.58785569222476
94 1.58706532928613
95 1.58675810183408
96 1.5869420380834
97 1.58762045145351
98 1.5887916587546
99 1.59044874283649
100 1.59257936678887
};
\addlegendentry{Exp. I};
\addplot [green!50!black, dashed]
table {%
0 1.50144
1 1.50144
2 1.50539999771313
3 1.50939997854142
4 1.5133999008405
5 1.51739968397551
6 1.52139919571107
7 1.52539823959549
8 1.52939654235576
9 1.53339374131063
10 1.53738937181195
11 1.54138285472791
12 1.54537348398527
13 1.54936041419201
14 1.55334264836664
15 1.55731902580512
16 1.56128821012221
17 1.56524867750979
18 1.56919870526103
19 1.57313636061615
20 1.57705948999216
21 1.58096570866684
22 1.5848523909944
23 1.58871666123829
24 1.5925553851146
25 1.59636516214768
26 1.60014231894772
27 1.60388290352795
28 1.60758268078752
29 1.61123712929338
30 1.61484143950204
31 1.61839051356918
32 1.62187896690075
33 1.62530113160539
34 1.62865106201187
35 1.63192254241912
36 1.63510909724819
37 1.63820400376632
38 1.64120030755193
39 1.64409084086638
40 1.64686824409319
41 1.64952499039739
42 1.65205341374798
43 1.65444574043262
44 1.65669412417815
45 1.65879068497025
46 1.66072755164294
47 1.66249690828154
48 1.66409104445195
49 1.66550240923427
50 1.66672366899974
51 1.66774776882691
52 1.66856799740497
53 1.66917805522103
54 1.66957212577144
55 1.6697449494781
56 1.6696918999264
57 1.66940906197483
58 1.66889331121607
59 1.66814239419687
60 1.6671550087301
61 1.66593088355665
62 1.66447085653992
63 1.66277695050087
64 1.66085244572929
65 1.65870194813777
66 1.65633145196051
67 1.65374839584076
68 1.65096171110062
69 1.64798186094612
70 1.64482086933128
71 1.64149233818924
72 1.63801145173757
73 1.63439496658243
74 1.63066118638136
75 1.62682991988262
76 1.62292242123859
77 1.61896131159603
78 1.61497048109711
79 1.61097497058362
80 1.60700083248415
81 1.60307497057986
82 1.59922495859013
83 1.5954788377938
84 1.59186489420381
85 1.58841141614288
86 1.58514643342087
87 1.58209743969034
88 1.57929109995003
89 1.57675294557293
90 1.57450705965115
91 1.57257575586722
92 1.57097925451426
93 1.56973535968767
94 1.56885914205002
95 1.56836263191961
96 1.56825452774129
97 1.5685399252559
98 1.56922007288087
99 1.57029215893817
};
\addlegendentry{Exp. II};
\end{groupplot}

\end{tikzpicture}
\end{tabular}
\caption{Causality tests and model evaluation for the robotic system. \capt{Plots on the left show example trajectories of two experiments, in the middle the experimental MMD and the test threshold for joint 3, and on the right predictions based on the initial model and on the refined model after the causal identification.}}
\label{fig:exp_eval}
\end{figure}

\subsection{Quadruple Tank Process}
\label{sec:4tank}

The experimental demonstration in the previous section showed that the presented algorithm can successfully identify a real-world robotic system's causal structure.
However, the causal structure is relatively straightforward, and dynamics are approximately linear.
To stress the method's ability to perform similarly well on more complex structures and with nonlinear dynamics, we now consider the quadruple tank system from \cite{johansson2000quadruple}, illustrated in \figref{fig:4tank}.
Its continuous-time dynamics are given by
\begin{align}
\begin{split}
\label{eqn:4tank}
\dot{x}_1 &= -\frac{a_1}{A_1}\sqrt{2gx_1} + \frac{a_3}{A_1}\sqrt{2gx_3}+\frac{\zeta_1}{A_1}u_1\\
\dot{x}_2 &= -\frac{a_2}{A_2}\sqrt{2gx_2} + \frac{a_3}{A_2}\sqrt{2gx_4}+\frac{\zeta_2}{A_2}u_2\\
\dot{x}_3 &= -\frac{a_3}{A_3}\sqrt{2gx_3} + \frac{1-\zeta_2}{A_3}u_2\\
\dot{x}_4 &= -\frac{a_4}{A_4}\sqrt{2gx_4} + \frac{1-\zeta_1}{A_4}u_1,
\end{split}
\end{align}
where $x_i$ denotes the water level of tank $i$, $u_i$ the flow rate of pump $i$, $g$ the gravitational constant (\SI{9.81}{\meter\per\second\squared}), and the remaining constants as in \figref{fig:4tank}.
The dynamics of the quadruple tank process are, thus, clearly nonlinear.
That is, we cannot expect good performance if we approximate them using a linear state-space model.
Therefore, we discretize the system and use Gaussian processes (GPs) to model the dynamics, see \cite{williams2006gaussian} for an introduction and details.
In particular, we model each state of the system with a GP, where each GP uses all states and inputs to predict its assigned state variable at the next time step.

\begin{figure}
\centering
\input{tikz/4tank.tex}
\caption{Schematic of the quadruple tank process from \cite{johansson2000quadruple}.}
\label{fig:4tank}
\end{figure}

We proceed similarly as for the robot experiments.
We select initial conditions through sophisticated guesses, \ie far apart from each other.
In a simulation, we can set initial conditions and do not need to steer the system there.
However, to be more realistic, we sample initial conditions from a normal distribution with mean equal to the selected initial conditions and variance of \num{1e-2}.
Thus, we only consider $\epsilon$-controllability.

Results of the testing procedure are shown in \figref{fig:res_4tank}.
The left plot shows test statistic and experimental MMD for the third tank's influence on all others. 
It reveals that the third tank influences itself and the first tank, which is in line with the schematic in \figref{fig:4tank} and the dynamics in \eqref{eqn:4tank}.
The right plot illustrates the test statistic and experimental MMD for the influence of $u_1$ on the tanks.
Also these results are in accordance with \figref{fig:4tank}, as the experiments suggest that $u_1$ influences all but the third tank.
The remaining results are collected in \secref{sec:4tank_add} in the appendix.
Similarly as the ones in \figref{fig:res_4tank}, they reveal the causal structure that can be inferred from \figref{fig:4tank}.

\begin{figure}
\centering
\begin{tikzpicture}

\begin{axis}[%
 enlargelimits = false,
 scale only axis,
 scale=1,
 width=0.25\textwidth,
 height=0.15\textheight,
 label style={font=\scriptsize},
 tick label style={font=\scriptsize},
 legend style={font=\scriptsize, at={(1.75,0.75)}},
 ylabel=$\widehat{\text{MMD}}^2$,
 xlabel= \# Tank,
 axis lines=left,
xmin=0,xmax=4.5,
 ymin=1e-6,ymax=1,
 grid=both,
 ymode = log,
 xtick={0,1,2,3,4},
 xticklabels={,1,2,3,4},
]

 \addplot[only marks,red,mark=asterisk, mark size=2.5000pt] table{%
 1 1.61596932e-01
 2 5.36474460e-06
 3 3.44238723e-01
 4 9.58978798e-06
 };
 \addlegendentry{Exp. MMD};
 \addplot[black,very thick,dashed] table{%
1 1.28703266e-04
2 3.28336414e-04
3 2.75153365e-04
4 3.46257839e-05
 };
\addlegendentry{Test stat.};
%     ];
\end{axis}
\end{tikzpicture}
\begin{tikzpicture}

\begin{axis}[%
 enlargelimits = false,
 scale only axis,
 scale=1,
 width=0.25\textwidth,
 height=0.15\textheight,
 label style={font=\scriptsize},
 tick label style={font=\scriptsize},
 ylabel=$\widehat{\text{MMD}}^2$,
 xlabel= \# Tank,
 axis lines=left,
xmin=0,xmax=4.5,
 ymin=1e-8,ymax=1,
 grid=both,
 ymode = log,
 xtick={0,1,2,3,4},
 xticklabels={,1,2,3,4},
]

 \addplot[only marks,red,mark=asterisk, mark size=2.5000pt] table{%
 1 1.80014841e-01
 2 8.03938714e-03
 3 2.45683662e-08
 4 1.55916573e-01
 };
 % \addlegendentry{Exp. MMD};
 \addplot[black,very thick,dashed] table{%
1 1.00558927e-04
2 8.74086114e-05
3 3.80417997e-05
4 8.60622177e-05
 };
% \addlegendentry{Test stat.};
%     ];
\end{axis}
\end{tikzpicture}
\caption{Causality tests for the quadruple tank system. \capt{The plots show the results of the experiments testing the influence of the third tank on all others (left) and the influence of $u_1$ on all tanks (right).}}
\label{fig:res_4tank}
\end{figure}

In this case, the causal identification results in a reduction of computational complexity.
Especially for the third and the fourth tanks, which are only influenced by $u_2$ respectively $u_1$ and by themselves, the input dimension of their corresponding GPs decreases from 10 to 3.
The complexity of standard GP regression grows with $\mathcal{O}(n^3)$ with the number of datapoints $n$.
Thus, if we can reduce the dimensions that need to be considered and, with that, the number of data points that we use for the regression by \SI{70}{\percent} as in this example, we can considerably reduce the computational complexity.

\subsection{Synthetic Example and Comparison}
\label{sec:syn_exp}

Lastly, we present a synthetic, linear example.
We consider an LTI system as in \eqref{eqn:loc_sys_lin}, with
\begin{align}
\label{eqn:syn_example}
A = \begin{pmatrix}
0.9 & -0.75 & 1.2 \\
0 & 0.9 & -1.1 \\
0 & 0 & 0.7
\end{pmatrix}
\quad
B = \begin{pmatrix}
0.03 & 0 & 0 \\
0 & 0.06 & 0 \\
0.07 & 0 & 0.05
\end{pmatrix}
\end{align}
and Gaussian noise with standard deviation \num{1e-4}.
For this example, we apply \algref{alg:ext_approach} without the need for the optimization procedure since the example is linear.
Again, we want to stress the importance of an appropriate notion of controllability.
That is, instead of assuming that we can set the system to initial conditions, we always \emph{steer} the system to the initial conditions required for that experiment.
For this, we employ an approach to set-point tracking that has, for instance, been discussed in \cite{pannocchia2005candidate}.
Given a desired state $x_\mathrm{des}$, we seek a feedback control law of the form $u = Mx_\mathrm{des} + Fx$, \ie a control law that depends both on the desired state and the current state.
We obtain the gain matrix $F$ using standard methods from linear optimal control \citep{anderson2007optimal}, in particular, the linear quadratic regulator (LQR).
Thus, we can rewrite the incremental dynamics of the system as
\begin{align}
x(t+1) = \tilde{A}x(t) + BMx_\mathrm{des} + v(t),
\end{align}
where $\tilde{A} \coloneqq A+BF$.
We now choose the feedforward term $M$ such that the reference is matched in stationarity, \ie we want to achieve
\begin{align}
x = (I - \tilde{A})^{-1}BMx_\mathrm{des}.
\end{align}
Thus, we need
\begin{align}
M = ((I - \tilde{A})^{-1}B)^{-1}
\end{align}
to track the reference point.
To compute $M$, we use the matrices $\hat{A}$ and $\hat{B}$ of the estimated model $\hat{f}$.
We start the experiment once $\norm{x(t)-x_\mathrm{des}}_2 < 0.01$.

Analyzing the resulting data with the MMD lets us, similar as before, infer the true causal structure.
In particular, the causal analysis reveals that $x_1$ does not cause $x_2$ nor $x_3$, $x_2$ does not cause $x_3$, and $u_2$ does not cause $x_3$. 

In addition to those results, we compare our method with a sparse identification method and a causal discovery algorithm for this linear example.
In both cases, we excite the system for \num{100000} time-steps with a chirp as an input signal.

First, we use SINDy, as in example~\ref{exp:loc_ind}, to identify the underlying discrete-time system\footnote{We use the implementation provided in \cite{desilva2020pysindy}.}.
Sparse identification algorithms seek to find a trade-off between model complexity and goodness of fit.
Thus, for such algorithms, a parameter needs to be selected to indicate how important accuracy is compared to complexity.
When applying SINDy to the synthetic system, its success in finding the true causal structure depends on the choice of this parameter.
We start with the parameter settings that were used for a three-dimensional linear example in \cite{brunton2016discovering}.
For those settings, the algorithm does not recognize the influence of $u_1$ on $x_1$, \ie the first row of the $B$ matrix consists of zeros.
In \figref{fig:synt_comp}, we show the effects that this error can have by comparing predictions of the true model obtained after the causal identification with predictions of the model obtained from SINDy.
For this, we set $u_1 = 10$, $u_2=0$, and $u_3 = -14$ and simulate both models for \num{100} time steps. 
As the SINDy model does not reflect the influence of $u_1$ on $x_1$, it assumes that $x_1$ does not move.
However, the model obtained after the causal identification reflects this influence and, thus, correctly predicts the movement of $x_1$.

\begin{figure}
\centering
% This file was created by tikzplotlib v0.9.0.
\begin{tikzpicture}

\begin{axis}[
 enlargelimits=false,
 scale only axis,
 scale=1,
 width=0.3\textwidth,
 height=0.17\textheight,
 label style={font=\scriptsize},
 tick label style={font=\scriptsize},
 legend style={font=\scriptsize, at={(0.9,0.6)}},
 ymax=3.5, ymin=-0.5,
 axis lines=left,
 grid=both,
 ylabel = $x_1$,
 unit markings = parenthesis,
 xlabel= $t$
]
\addplot [blue, thick]
table {%
0 0
1 0.3
2 0.57
3 0.813
4 1.0317
5 1.22853
6 1.405677
7 1.5651093
8 1.70859837
9 1.837738533
10 1.9539646797
11 2.05856821173
12 2.152711390557
13 2.2374402515013
14 2.31369622635117
15 2.38232660371605
16 2.44409394334445
17 2.49968454901
18 2.549716094109
19 2.5947444846981
20 2.63527003622829
21 2.67174303260546
22 2.70456872934492
23 2.73411185641042
24 2.76070067076938
25 2.78463060369244
26 2.8061675433232
27 2.82555078899088
28 2.84299571009179
29 2.85869613908261
30 2.87282652517435
31 2.88554387265692
32 2.89698948539122
33 2.9072905368521
34 2.91656148316689
35 2.9249053348502
36 2.93241480136518
37 2.93917332122866
38 2.9452559891058
39 2.95073039019522
40 2.9556573511757
41 2.96009161605813
42 2.96408245445231
43 2.96767420900708
44 2.97090678810637
45 2.97381610929574
46 2.97643449836616
47 2.97879104852955
48 2.98091194367659
49 2.98282074930893
50 2.98453867437804
51 2.98608480694024
52 2.98747632624621
53 2.98872869362159
54 2.98985582425943
55 2.99087024183349
56 2.99178321765014
57 2.99260489588512
58 2.99334440629661
59 2.99400996566695
60 2.99460896910026
61 2.99514807219023
62 2.99563326497121
63 2.99606993847409
64 2.99646294462668
65 2.99681665016401
66 2.99713498514761
67 2.99742148663285
68 2.99767933796956
69 2.99791140417261
70 2.99812026375535
71 2.99830823737981
72 2.99847741364183
73 2.99862967227765
74 2.99876670504988
75 2.99889003454489
76 2.9990010310904
77 2.99910092798136
78 2.99919083518323
79 2.9992717516649
80 2.99934457649841
81 2.99941011884857
82 2.99946910696371
83 2.99952219626734
84 2.99956997664061
85 2.99961297897655
86 2.99965168107889
87 2.999686512971
88 2.9997178616739
89 2.99974607550651
90 2.99977146795586
91 2.99979432116027
92 2.99981488904425
93 2.99983340013982
94 2.99985006012584
95 2.99986505411326
96 2.99987854870193
97 2.99989069383174
98 2.99990162444856
99 2.99991146200371
100 2.99992031580334
};
\addlegendentry{Causal};
\addplot [green!50!black, thick]
table {%
0 0
1 0
2 0
3 0
4 0
5 0
6 0
7 0
8 0
9 0
10 0
11 0
12 0
13 0
14 0
15 0
16 0
17 0
18 0
19 0
20 0
21 0
22 0
23 0
24 0
25 0
26 0
27 0
28 0
29 0
30 0
31 0
32 0
33 0
34 0
35 0
36 0
37 0
38 0
39 0
40 0
41 0
42 0
43 0
44 0
45 0
46 0
47 0
48 0
49 0
50 0
51 0
52 0
53 0
54 0
55 0
56 0
57 0
58 0
59 0
60 0
61 0
62 0
63 0
64 0
65 0
66 0
67 0
68 0
69 0
70 0
71 0
72 0
73 0
74 0
75 0
76 0
77 0
78 0
79 0
80 0
81 0
82 0
83 0
84 0
85 0
86 0
87 0
88 0
89 0
90 0
91 0
92 0
93 0
94 0
95 0
96 0
97 0
98 0
99 0
100 0
};
\addlegendentry{SINDy};
\end{axis}

\end{tikzpicture}
\caption{Comparison of prediction capabilities. \capt{In blue predictions of the true model obtained after causal identification, in green the model obtained from SINDy with parameter choices from \cite{brunton2016discovering}. We do not compare to PCMCI in the numerical experiment since PCMCI only reveals which causal influences exist, but not how strong they are, \ie different from our algorithm and SINDy, it does not return a full system model that can be used in simulations.}}
\label{fig:synt_comp}
\end{figure}

Only when lowering the threshold parameter can SINDy recover the true causal structure of the system.
This stresses the general shortcoming of sparse identification methods when identifying the causal structure of a control system.
Depending on the parameter settings, they may or may not recover the true causal structure.
However, their general purpose is different.
They seek a trade-off between model complexity and accuracy.
Thus, neglecting a causal link with a comparably weak influence might be the desired outcome.
In contrast, we seek the true causal structure, independent of how strong the influence is.

Second, we compare our algorithm to the PCMCI\footnote{an adaptation of the PC algorithm, named after its creators Peter Spirtes and Clark Glymour \citep{spirtes2000causation}, which uses a momentary conditional independence (MCI) test to deal with auto-correlations in time-series data} algorithm proposed in \cite{runge2019detecting}\footnote{For our simulations, we use the implementation and parameter settings provided in \url{https://github.com/jakobrunge/tigramite}}.
This algorithm focuses on detecting causal influences from time-series data.
That is, it does not yield an identified system matrix but discovers which variables have a causal influence on which others.
When again exciting the linear system with a chirp signal and running the PCMCI algorithm, we receive an output that we can interpret as
\begin{align}
\label{eqn:res_pcmci}
\begin{split}
x_1(k+1) &= a_1x_1(k) + a_2x_2(k) + a_3x_3(k) + b_1u_1(k)\\
x_2(k+1) &= a_4x_1(k-1) + a_5x_2(k) + a_5x_3(k) + a_6x_3(k-2) + b_2u_2(k) + b_3u_3(k-1)\\
x_3(k+1) &= a_7x_1(k-4) + a_8x_3(k) + b_4u_1(k) + b_5u_3(k),
\end{split}
\end{align}
where all weights $a_i$ and $b_i$ are non-zero.
Besides, the algorithm discovers an influence of $x_1$ on $u_3$.
However, since such links were ruled out by design for the other algorithms, we neglect this here.
Nevertheless, also the equations as stated above are not in line with the actual matrices in \eqref{eqn:syn_example}.
For instance, following \eqref{eqn:res_pcmci}, the variable $x_1$ has a causal influence on $x_2$ and $x_3$ through the non-zero factors $a_4$ and $a_7$.
However, given the upper-triangular structure of the $A$ matrix, $x_3$ is not influenced by any other state variable apart from itself.
Similarly, also $a_4$ should be zero.
Thus, the algorithm does not reveal the true structure of the system.

\section{Conclusion}
\label{sec:conclusion}

We presented a method that identifies the causal structure of dynamical control systems by conducting experiments and analyzing the generated data with MMD-based techniques.
It differs from prior approaches to causal inference in that it uses a controllability notion that is suitable to design experiments for control systems.
We evaluated the method on a real-world robotic system and a simulated quadruple tank system. 
Our algorithm successfully identified the underlying causal structure of both systems, which in turn allowed us to learn a model that accurately generalizes to previously unseen states while reducing computational complexity.

\subsubsection*{Acknowledgments}
The authors would like to thank Manuel W\"uthrich and Alonso Marco Valle for insightful discussions, Steve Heim for valuable feedback, and Vincent Berenz for support with the robot experiments.
This work has received funding from the German Research
Foundation within the SPP 1914 (grant TR 1433/1-1), the Federal Ministry of Education and Research (BMBF) and
the Ministry of Culture and Science of the German State of North Rhine-Westphalia
(MKW) under the Excellence Strategy of the Federal Government and the Länder, the Cyber
Valley Initiative, the Max Planck Society, the Knut and Alice Wallenberg Foundation, and the Swedish Research Council.

\bibliography{tmlr}
\bibliographystyle{tmlr}

\appendix
\section{Proof of Theorem~\ref{thm:linear_result}}
\label{sec:linear}
An LTI system with Gaussian noise follows a normal distribution, whose mean and variance are given by
\begin{subequations}
\begin{align}
\label{eqn:lin_sys_mean}
\E[x(t)] &= A^tx(0)+ \sum_{i=0}^{t-1} A^iBu(t-1-i)\\
\label{eqn:lin_sys_var}
\Var[x(t)] &= \sum_{i=0}^{t-1} A^i\Var[v(t-1-i)],
\end{align}
\end{subequations}
where we assume $\Var[v(t)]=\Sigma_\mathrm{v}$ for all $t$.
For such systems, we will first show that if \eqref{eqn:lin_sys_mean} obeys the controllability conditions stated by Kalman, the system is also controllable according to definitions~\ref{def:eps_controllability} and~\ref{def:distr_controllability}. 
\begin{lem}
\label{lem:controllability_linear}
The system in \eqref{eqn:loc_sys_lin} is completely $\epsilon$-controllable in distribution if the deterministic part obeys the controllability condition stated in \cite{kalman1960contributions}.
\end{lem}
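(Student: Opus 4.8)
The plan is to show that Kalman controllability of the deterministic (mean) dynamics implies the $\epsilon$-controllability-in-distribution property of Definition~\ref{def:distr_controllability}. The key structural observation is the one already recorded before the lemma: for an LTI system with additive Gaussian noise $v(t)\sim\mathcal{N}(0,\Sigma)$, the state $x(t_\mathrm{f})$ is Gaussian, its mean is the deterministic trajectory~\eqref{eqn:lin_sys_mean}, and its covariance $\sum_{i=0}^{t_\mathrm{f}-1} A^i \Sigma (A^i)^\transp$ depends only on $t_\mathrm{f}$, $A$, and $\Sigma$ — crucially \emph{not} on $x(0)$ or on the chosen input $u$. Therefore the entire distribution of $x(t_\mathrm{f})$ is pinned down once its mean is pinned down, and steering the distribution reduces to steering the deterministic mean.

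First I would fix the horizon $t_\mathrm{f}=n$ (or any $t_\mathrm{f}\ge n$) and write the reachability map from inputs to the mean as the linear map $u(0{:}n-1)\mapsto \sum_{i=0}^{n-1} A^i B\,u(n-1-i)$, whose image is the column space of the controllability matrix $\begin{pmatrix}B&AB&\cdots&A^{n-1}B\end{pmatrix}$. Under the full-row-rank hypothesis this map is surjective onto $\R^n$, so for any desired mean offset relative to $A^{n}x(0)$ there exists an input sequence achieving it exactly. This immediately gives Definition~\ref{def:eps_controllability} with $\epsilon=0$ and $\eta$ arbitrary (the mean can be placed exactly at $x_\mathrm{des}$, and the residual Gaussian spread is what the $\epsilon$-ball absorbs for $\epsilon>0$).

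Next I would establish the stronger distributional statement. Given the target component value $x^*_{\ell,\mathrm{des}}$, I pick \emph{any} $x_\mathrm{des}$ whose $\ell$-th coordinate equals $x^*_{\ell,\mathrm{des}}$ and, for each initial condition $x(0)$, use surjectivity to select $u$ so that the mean of $x(n)$ equals this $x_\mathrm{des}$ exactly. Because the covariance is input- and $x(0)$-independent, the resulting law of $x(n)$ — and hence the marginal law of the component $x_\ell(n)$ — is the same Gaussian $P^*$ regardless of $x(0)$. This is precisely $P(x_\ell(t_\mathrm{f}))=P^*$ with $P^*$ depending only on $x^*_{\ell,\mathrm{des}}$, as Definition~\ref{def:distr_controllability} demands.

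The main obstacle, and the only place requiring genuine care, is the covariance-invariance claim: I must verify that the noise contribution to $x(t_\mathrm{f})$ is statistically decoupled from the input and from $x(0)$. This follows because the input enters only additively through the deterministic convolution and the $v(\cdot)$ are independent of $x(0)$, so shifting the input merely translates the Gaussian without reshaping it; I would cite~\citep[Sec.~3.7]{pavliotis2014stochastic} for the closed form and note that constancy of $\Sigma$ is assumed. A secondary point to handle cleanly is the marginalization to the single component $x_\ell$: since marginals of a Gaussian are Gaussian and the full law is $x(0)$-independent, the $\ell$-th marginal is as well, which closes the argument.
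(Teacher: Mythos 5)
Your proposal is correct and takes essentially the same route as the paper's own proof: Kalman controllability of the deterministic part lets you place the Gaussian mean exactly at $x_\mathrm{des}$ over a fixed horizon, and since the covariance depends only on the horizon (not on $x(0)$ or the input), the law of $x_\ell(t_\mathrm{f})$ is identical across initial conditions, with the $\epsilon$-probability bound coming from the Gaussian tail. One wording slip: Definition~\ref{def:eps_controllability} cannot hold with $\epsilon=0$ for a system with nondegenerate noise --- only the \emph{mean} is matched exactly --- but your parenthetical remark already corrects this, and it coincides with the paper's cumulative-distribution-function argument.
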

\begin{proof}
The expected value in \eqref{eqn:lin_sys_mean} represents the deterministic part of the system.
Thus, according to \cite{kalman1960contributions}, we can design an input trajectory that steers \eqref{eqn:lin_sys_mean} to any point in the state space.
Since we do not assume constraints on the input or the state variables, the desired state can be reached with a trajectory within $q<\infty$ time steps (\cf deadbeat control \citep{kalman1960general,emami1982deadbeat}).
That is, starting from any $x(0)\in\mathcal{X}$ we can steer the system state toward $x(q)\mbeq \xI$ in $q$ time steps and obtain the distribution
\begin{align*}
\E[x(q)] &= \xI\\
\Var[x(q)] &= \sum_{i=0}^{q-1} A^i\Sigma_\mathrm{v}.
\end{align*}
The probability of $\norm{x(q)-\xI}_2^2$ being larger than $\epsilon$ is given by the cumulative distribution function of the normal distribution.
\end{proof}

\fakepar{Proof of theorem~\ref{thm:linear_result}}
We can now prove theorem~\ref{thm:linear_result}.
Since the variance of \eqref{eqn:loc_sys_lin} solely depends on the number of time steps, which is equal for all experiments, distributions can only be different because of their means.
We start with experiments that are designed according to \eqref{eqn:exp_design_state}.
In this case, for distributions to be equal, and, thus, for variables to be non-causal, we need
\begin{align*}
e_i\left(A^t\xI(0) + \sum_{i=0}^{t-1} A^i(Bu(t-1-i)\right)= e_i\left(A^t\xII(0) + \sum_{i=0}^{t-1} A^i(Bu(t-1-i)\right),
\end{align*}
where $e_i\in\R^n$ is the unit vector (\ie a vector with zeros and a single 1 at the $i$th entry).
Since input trajectories are equal, this boils down to 
\begin{align*}
e_iA^t\xI(0) = e_iA^t\xII(0).
\end{align*}
Essentially this means that the component $ij$ of $A^t$ needs to be 0.
This is clearly the case, if there is no influence of $x_i$ on $x_j$, \ie in case variables are non-causal, we have $\mmd=0$.
The event of component $ij$ of $A^t$ being 0 by chance, even though $x_j$ has a causal influence on $x_i$, has probability 0.
Thus, we have that variables are non-causal if $\mmd=0$.

For experiments that are designed according to \eqref{eqn:exp_design_inp}, initial states are equal and, in case variables are non-causal, we have
\begin{align*}
e_i\sum_{i=0}^{t-1} A^i(B\uI(t-1-i)= e_i\sum_{i=0}^{t-1} A^i(B\uII(t-1-i).
\end{align*}
Similar as before, we have equal distributions and, thus, $\mmd=0$ if entries in the $A^iB$ matrices relating $x_i$ and $u_j$ are 0, \ie if there is no causal influence.
The other direction holds since the event of the relevant entries being 0 by chance has probability 0.

\section{Further Example for Assumption~\ref{assume:no_loc_ind}}
\label{sec:example_loc_non_caus}

We provide a further example to empirically validate assumption~\ref{assume:no_loc_ind}.
In practice, regions of local non-causality are often due to hysteresis effects.
For instance, a resting body first needs to overcome static friction.
That is, when the velocity is zero, the input signal needs to overcome a threshold to actually influence the velocity.
In particular, we consider the system
\begin{align}
\label{eqn:hysteresis}
x(k+1) = \begin{cases}
        \begin{pmatrix} 1&0.01\\ 0 & 1\end{pmatrix}x(k) + \begin{pmatrix}0\\0.1\end{pmatrix}u(k) + v(k)&\text{ if }x_1(k)\neq 0 \lor \abs{u(k)} > 0.1\\
        x(k) &\text{ else},
        \end{cases}
\end{align}
where the state is 2-dimensional, the input is scalar, and $v(k)$ is a normally distributed random variable with standard deviation \num{1e-4}.
We excite the system with a ramp function starting at zero and going until two and learn a GP model.
For the GP model, we use a Gaussian kernel for which we optimize the hyperparameters.
We then assume the initial condition of both state variables to be zero and compute the MMD of simulated trajectories for 100 step inputs, with $\uI$ ranging from 0.05 to 5 and $\uII=-\uI$.
The results confirm our findings from example~\ref{exp:loc_ind} since also here assumption~\ref{assume:no_loc_ind} is satisfied (see \figref{fig:hysteresis}).

\begin{figure}
\centering
\begin{tikzpicture}

\definecolor{color0}{rgb}{0.12156862745098,0.466666666666667,0.705882352941177}

\begin{axis}[
ylabel={$\mmdsq$},
xlabel={$u_1$},
label style={font=\scriptsize},
tick label style={font=\scriptsize},
axis lines=left,
grid=both,
width=0.7\textwidth,
height=0.2\textheight,
xmax=4.9
]
\addplot [semithick, color0]
table {%
5 0.834369867900557
2.5 0.893299532006752
1.66666666666667 0.888217513405374
1.25 0.866332807054533
1 0.83855294538461
0.833333333333333 0.808618983295809
0.714285714285714 0.778148145212736
0.625 0.747942009962717
0.555555555555556 0.718429090593826
0.5 0.689845431815465
0.454545454545455 0.66232000057712
0.416666666666667 0.635914540298622
0.384615384615385 0.61065269209442
0.357142857142857 0.586530730572617
0.333333333333333 0.563527676483763
0.3125 0.541612486528391
0.294117647058824 0.520747993441974
0.277777777777778 0.500891116001617
0.263157894736842 0.481997197172686
0.25 0.464021729281291
0.238095238095238 0.446919254140911
0.227272727272727 0.430645494652746
0.217391304347826 0.415157373157702
0.208333333333333 0.400412611920289
0.2 0.386371908352744
0.192307692307692 0.372997326743528
0.185185185185185 0.360251382626186
0.178571428571429 0.348100259640162
0.172413793103448 0.336511340706717
0.166666666666667 0.325453709389854
0.161290322580645 0.314898790470734
0.15625 0.304818030816771
0.151515151515152 0.29518714628638
0.147058823529412 0.285980453449712
0.142857142857143 0.277176183759506
0.138888888888889 0.268751604791819
0.135135135135135 0.260687114089702
0.131578947368421 0.252963782873771
0.128205128205128 0.245564006659005
0.125 0.238470430956194
0.121951219512195 0.23166733065673
0.119047619047619 0.225139382752085
0.116279069767442 0.218874096388943
0.113636363636364 0.212856822313691
0.111111111111111 0.207075728589758
0.108695652173913 0.201518308311557
0.106382978723404 0.19617515907795
0.104166666666667 0.191034556600377
0.102040816326531 0.186087315903241
0.1 0.181323586530173
0.0980392156862745 0.176736292982325
0.0961538461538462 0.172315406318493
0.0943396226415094 0.168053539243733
0.0925925925925926 0.163944356299327
0.0909090909090909 0.159979598124955
0.0892857142857143 0.15615387607463
0.087719298245614 0.152459929935138
0.0862068965517241 0.148892811541498
0.0847457627118644 0.145446546774072
0.0833333333333333 0.142115729431844
0.0819672131147541 0.138895745886491
0.0806451612903226 0.135781493734701
0.0793650793650794 0.132768674775131
0.078125 0.129852768005697
0.0769230769230769 0.127030369124432
0.0757575757575758 0.124297074447291
0.0746268656716418 0.121649640835079
0.0735294117647059 0.119083780706209
0.072463768115942 0.116597657971051
0.0714285714285714 0.114186747889597
0.0704225352112676 0.111848317351806
0.0694444444444444 0.109580240256737
0.0684931506849315 0.107379027778871
0.0675675675675676 0.105242898927744
0.0666666666666667 0.103168516980567
0.0657894736842105 0.101154034575504
0.0649350649350649 0.0991969501134539
0.0641025641025641 0.0972953938903241
0.0632911392405063 0.0954472265632254
0.0625 0.0936504202746931
0.0617283950617284 0.0919036765072881
0.0609756097560976 0.0902038818487784
0.0602409638554217 0.0885506413427195
0.0595238095238095 0.0869417187331461
0.0588235294117647 0.0853757445542865
0.0581395348837209 0.0838510018232615
0.0574712643678161 0.0823665224396561
0.0568181818181818 0.0809204266654305
0.0561797752808989 0.0795120041292742
0.0555555555555556 0.0781393644893048
0.0549450549450549 0.0768013228449005
0.0543478260869565 0.0754973755598096
0.0537634408602151 0.0742257079156117
0.0531914893617021 0.0729857342362645
0.0526315789473684 0.0717763713999282
0.0520833333333333 0.0705962793093592
0.0515463917525773 0.0694445033680788
0.0510204081632653 0.0683209516163937
0.0505050505050505 0.0672239160832915
0.05 0.066152660187133
};
\end{axis}

\end{tikzpicture}
\caption{Simulated MMD for the hysteresis system in \eqref{eqn:hysteresis}. \capt{Similar as for example~\ref{exp:loc_ind}, also here we see that assumption~\ref{assume:no_loc_ind} is satisfied.}}
\label{fig:hysteresis}
\end{figure}

\section{Further Results of the Robot Experiments}
\label{sec:exp_res}

We first provide implementation details for the experiments presented in \secref{sec:eval}.
The initial model estimate is obtained by exciting the system with a chirp signal for \SI{30}{\second} and using the generated data to learn a linear state-space model (\cf \eqref{eqn:loc_sys_lin}) with least squares.
The obtained matrices are
\begin{align*}
A_\mathrm{init} \approx \begin{pmatrix}
0.868 & -0.132 & 0.754 & -0.491 \\
-0.132 & 0.868 & 0.754 & -0.491 \\
-0.132 & -0.132 & 1.754 & -0.491 \\
-0.134 & -0.134 & 0.76 & 0.508
\end{pmatrix}
\quad
B_\mathrm{init} \approx \begin{pmatrix}
0.075 & -0.056 & -0.031 & 0.022 \\
0.074 & -0.055 & -0.031 & 0.022 \\
0.074 & -0.056 & -0.03 & 0.022 \\
0.075 & -0.056 & -0.032 & 0.022
\end{pmatrix}.
\end{align*}
Initial conditions and input trajectories for the causality testing experiments are obtained through sophisticated guesses, as discussed in \secref{sec:framework}.
The found initial conditions and input trajectories yield expected MMDs orders of magnitude above the system's noise level for all joints.
Thus, we need only eight experiments to identify the causal structure.
We design input trajectories of \num{100} time steps for each experiment, repeat the experiment ten times, and use collected data from all experiments for hypothesis testing.
While the squared MMD is always positive, the empirical approximation in \eqref{eqn:mmd_kernel} can become negative since it is an unbiased estimate.
For the test statistic in \eqref{eqn:test_statistic}, we estimate the variance using \num{100} Monte Carlo simulations and obtain the expected value through a noiseless simulation. 
We use $\nu=1$, but as we will see in the results, the empirical MMD is, in all cases, orders of magnitude below or above the threshold. 
Thus, more conservative choices of $\nu$ would yield the same outcome.

In tables~\ref{tab:exp_res_1} and~\ref{tab:exp_res_2}, we present the results of all causality testing experiments conducted on the robotic platform shown in \figref{fig:robot}.
As for the results discussed in \secref{sec:eval}, we always have a clear decision on whether to accept or reject the null hypothesis: The MMD found in experiments is always orders of magnitude larger or smaller than the test statistic.
Also here, we find that all joints can be moved independently of each other and are affected by exactly one input.
When exploiting the revealed causal structure for identifying the system matrices, we obtain
\begin{align*}
A_\mathrm{caus} = \begin{pmatrix}
1 & 0 & 0 & 0 \\
0 & 1 & 0 & 0 \\
0 & 0 & 1 & 0 \\
0 & 0 & 0 & 1
\end{pmatrix}
\quad
B_\mathrm{caus} \approx \begin{pmatrix}
0.013 & 0 & 0 & 0 \\
0 & 0.007 & 0 & 0 \\
0 & 0 & 0.01 & 0 \\
0 & 0 & 0 & 0.01
\end{pmatrix}.
\end{align*}

\begin{table}
\centering
\caption{Results of the causal structure identification for a robot arm. 
\capt{Causal influences of joints on each other.}}
\label{tab:exp_res_1}
\begin{tabular}{ccc}
\toprule
Joint $\rightarrow$ & Experimental & Test \\
Joint & MMD & statistic \\
\midrule
$x_1\rightarrow x_1$ & 0 & \num{1.65e-04} \\
$x_1\rightarrow x_2$ & 0 & \num{1.79e-04} \\
$x_1\rightarrow x_3$ & 0 & \num{2.39e-4} \\
$x_1\rightarrow x_4$ & \num{2.43e-13} & \num{1.61e-4}\\
$x_2\rightarrow x_1$ & 0 & \num{5.6e-7}\\
$x_2\rightarrow x_2$ & \num{-2.8e-18} & \num{4.58e-7}\\
$x_2\rightarrow x_3$ & 0 & \num{3.56e-7}\\
$x_2\rightarrow x_4$ & \num{1.82e-13} & \num{6.91e-7}\\
$x_3\rightarrow x_1$ & 0 & \num{5.81e-7}\\
$x_3\rightarrow x_2$ & 0 & \num{4.54e-7}\\
$x_3\rightarrow x_3$ & \num{-1.38e-18} & \num{6.16e-7}\\
$x_3\rightarrow x_4$ & \num{1.29e-16} & \num{5.2e-7}\\
$x_4\rightarrow x_1$ & 0 & \num{4.99e-7}\\
$x_4\rightarrow x_2$ & 0 & \num{4.66e-7}\\
$x_4\rightarrow x_3$ & \num{9.63e-15} & \num{5.8e-7}\\
$x_4\rightarrow x_4$ & \num{-5.44e-15} & \num{5.8e-7}\\
\bottomrule
\end{tabular}
\end{table}

\begin{table}
\centering
\caption{Results of the causal structure identification for a robot arm. 
\capt{Causal influences of inputs on joints.}}
\label{tab:exp_res_2}
\begin{tabular}{ccc}
\toprule
Input $\rightarrow$ & Experimental & Test\\
Joint & MMD & statistic \\
\midrule
$u_1\rightarrow x_1$ & \num{0.04}& \num{1.18e-5} \\
$u_1\rightarrow x_2$ & 0 & \num{5.38e-7} \\
$u_1\rightarrow x_3$ & 0 & \num{6.51e-7} \\
$u_1\rightarrow x_4$ & 0 & \num{4.47e-7}\\
$u_2\rightarrow x_1$ & 0 & \num{5.09e-5}\\
$u_2\rightarrow x_2$ & \num{0.04} & \num{1.15e-5}\\
$u_2\rightarrow x_3$ & \num{3.51e-14} & \num{5.95e-7}\\
$u_2\rightarrow x_4$ & \num{3.51e-14} & \num{4.67e-7}\\
$u_3\rightarrow x_1$ & \num{2.31e-13} & \num{5.26e-5}\\
$u_3\rightarrow x_2$ & \num{1.4e-13} & \num{4.28e-5}\\
$u_3\rightarrow x_3$ & \num{0.04} & \num{1.18e-5}\\
$u_3\rightarrow x_4$ & \num{2.25e-12} & \num{4.36e-7}\\
$u_4\rightarrow x_1$ & 0 & \num{4.47e-5}\\
$u_4\rightarrow x_2$ & 0 & \num{5.11e-5}\\
$u_4\rightarrow x_3$ & 0 & \num{5.69e-7}\\
$u_4\rightarrow x_4$ & \num{0.04} & \num{6.58e-4}\\
\bottomrule
\end{tabular}
\end{table}

\section{Further Results of the Quadruple Tank Experiments}
\label{sec:4tank_add}

We discretize the quadruple tank system with a time-step of \SI{100}{\milli\second}.
We choose $A_i = \SI{50}{\centi\meter\squared}$ for all tanks, $a_{1,2} = \SI{0.242}{\centi\meter\squared}$, $a_{3,4} = \SI{0.242}{\centi\meter\squared}$, and the valve parameters $\zeta_{1,2} = 0.5$.
For the initial model learning, we excite the system for \num{5000} time steps.
During excitation, the input is sampled from a uniform distribution with an interval $[0,60]$.
We use the generated data to learn a GP with a Gaussian kernel for each state variable.
Having identified a model, we follow \algref{alg:ext_approach} to identify the causal structure.
As for the robot experiments, we repeat each experiment 10 times and use 50 simulations to estimate the standard deviation.
We choose $\nu=10$ in \eqref{eqn:test_statistic} to avoid false positives.
The results are collected in tables~\ref{tab:4tank_res_ext_1} and~\ref{tab:4tank_res_ext_2}.

\begin{table}
\centering
\caption{Results of the causal structure identification for a quadruple tank system. 
\capt{Causal influences of the tanks on each other.}}
\label{tab:4tank_res_ext_1}
\begin{tabular}{ccc}
\toprule
Tank $\rightarrow$ & Experimental & Test \\
Tank & MMD & statistic \\
\midrule
$x_1\rightarrow x_1$ & \num{2.78e-01} & \num{9.81e-04} \\
$x_1\rightarrow x_2$ & \num{7.39e-06} & \num{9.55e-05} \\
$x_1\rightarrow x_3$ & \num{1.47e-05} & \num{9.78e-05} \\
$x_1\rightarrow x_4$ & \num{9.45e-05} & \num{1.01e-03}\\
$x_2\rightarrow x_1$ & \num{3.57e-06} & \num{2.35e-05}\\
$x_2\rightarrow x_2$ & \num{2.74e-01} & \num{4.58e-04}\\
$x_2\rightarrow x_3$ & \num{2.32e-06} & \num{4.31e-06}\\
$x_2\rightarrow x_4$ & \num{3.54e-05} & \num{1.41e-04}\\
$x_3\rightarrow x_1$ & \num{1.62e-01} & \num{1.29e-04}\\
$x_3\rightarrow x_2$ & \num{5.36e-06} & \num{3.28e-04}\\
$x_3\rightarrow x_3$ & \num{3.44e-01} & \num{2.75e-04}\\
$x_3\rightarrow x_4$ & \num{9.59e-06} & \num{3.46e-05}\\
$x_4\rightarrow x_1$ & \num{4.28e-06} & \num{4.04e-05}\\
$x_4\rightarrow x_2$ & \num{1.45e-01} & \num{8.74e-05}\\
$x_4\rightarrow x_3$ & \num{2.35e-06} & \num{6.56e-06}\\
$x_4\rightarrow x_4$ & \num{3.57e-01} & \num{1.42e-04}\\
\bottomrule
\end{tabular}
\end{table}

\begin{table}
\centering
\caption{Results of the causal structure identification for a quadruple tank system. 
\capt{Causal influences of inputs on tanks.}}
\label{tab:4tank_res_ext_2}
\begin{tabular}{ccc}
\toprule
Input $\rightarrow$ & Experimental & Test\\
Tank & MMD & statistic \\
\midrule
$u_1\rightarrow x_1$ & \num{1.8e-01}& \num{1.01e-04} \\
$u_1\rightarrow x_2$ & \num{8.04e-03} & \num{8.74e-05} \\
$u_1\rightarrow x_3$ & \num{2.46e-08} & \num{3.8e-05} \\
$u_1\rightarrow x_4$ & \num{1.56e-01} & \num{8.61e-05}\\
$u_2\rightarrow x_1$ & \num{7.55e-03} & \num{9.86e-05}\\
$u_2\rightarrow x_2$ & \num{1.78e-01} & \num{1.12e-04}\\
$u_2\rightarrow x_3$ & \num{1.58e-01} & \num{3.77e-05}\\
$u_2\rightarrow x_4$ & \num{5.44e-06} & \num{4.46e-05}\\
\bottomrule
\end{tabular}
\end{table}

\end{document}